\documentclass[preprint,twoside,11pt]{article}

\usepackage{jmlr2e}

\usepackage[utf8]{inputenc}

\usepackage{abstract,amsmath,amssymb,latexsym}
\usepackage{enumitem,epsf}
\usepackage{algorithm}
\usepackage{algorithmic}

\newcommand{\norm}[1]{\left\lVert #1 \right\rVert}

\newcommand{\STAT}{\textrm{STAT}}
\newcommand{\Var}{\textrm{Var}}
\newcommand{\clamp}{\textrm{clamp}}
\newcommand{\ALG}{\textrm{ALG}}
\newcommand{\ORACLE}{\textrm{ORACLE}}

\usepackage{macros}

\jmlrheading{}{2020}{1-23}{}{}{}{}


\ShortHeadings{When Hardness of Approximation Meets Hardness of Learning}{}
\firstpageno{1}

\begin{document}

\title{When Hardness of Approximation Meets Hardness of Learning}

\author{\name Eran Malach \email eran.malach@mail.huji.ac.il \\
       \addr School of Computer Science\\
       The Hebrew University \\
       Jerusalem, Israel
       \AND
       \name Shai Shalev-Shwartz \email shais@cs.huji.ac.il \\
       \addr School of Computer Science\\
       The Hebrew University \\
       Jerusalem, Israel}

\editor{}

\maketitle

\begin{abstract}%
A supervised learning algorithm has access to a distribution of labeled examples, and needs to return a function (hypothesis) that correctly labels the examples. The hypothesis of the learner is taken from some fixed class of functions (e.g., linear classifiers, neural networks etc.). A failure of the learning algorithm can occur due to two possible reasons: wrong choice of hypothesis class (hardness of \textit{approximation}), or failure to find the best function within the hypothesis class (hardness of \textit{learning}). Although both approximation and learnability are important for the success of the algorithm, they are typically studied separately. In this work, we show a single hardness property that implies both hardness of approximation using linear classes and shallow networks, and hardness of learning using correlation queries and gradient-descent. This allows us to obtain new results on hardness of approximation and learnability of parity functions, DNF formulas and $AC^0$ circuits.
\end{abstract}

\begin{keywords}
Hardness of learning, approximation, statistical-queries, gradient-descent, neural networks
\end{keywords}

\maketitle

\section{Introduction}
Given a distribution $\cd$ over an instance space $\cx$ and a target classification function $f : \cx \to \{\pm 1\}$, let $f(\cd)$ be the distribution over $\cx \times \{\pm 1\}$ obtained by sampling $x \sim \cd$ and labeling it by $f(x)$. A learning algorithm, $\ALG$, has access to the distribution $f(\cd)$ via an oracle, $\ORACLE(f, \cd)$, and should output a hypothesis $h : \cx \to \reals$. The quality of $h$ is assessed by the expected loss function:
\[
L_{f(\cd)}(h) := \E_{(x,y) \sim f(\cd)}[ \ell(h(x),y)] ~,
\]
where $\ell : \reals \times \{\pm 1\} \to \reals_+$ is some loss function. We say that the learning is $\epsilon$-successful if $\ALG$ returns a function $\ALG(f,\cd)$ such that:
\[
\E[ L_{f(\cd)}(\ALG(f,\cd))] \le \epsilon ~,
\]
where the expectation is with respect to the randomness of the learning process. Of course, due to the well known \textit{no-free-lunch} theorem, we cannot expect any single algorithm to succeed in this objective for all choices of $(f,\cd)$. So, we must make some assumptions on the nature of the labeled distributions observed by the algorithm. We denote by $\ca$ the \textit{distribution family} (or \textit{assumption class}, as in \cite{kearns1994toward}), which is some set of pairs $(f,\cd)$, where $f$ is some function from $\cx$ to $\{\pm 1\}$ and $\cd$ is some distribution over $\cx$. We say that {\bf $\ALG$ is $\epsilon$-successful} on a distribution family $\ca$, if it $\epsilon$-succeeds on every $(f,\cd) \in \ca$, namely:
\[
\max_{(f,\cd) \in \ca} \E \left[ L_{f(\cd)}(\ALG(f,\cd)) \right] \le \epsilon
\]

The standard approach for understanding whether some algorithm is successful is using a \textit{decomposition of error}. Let $\ch$ be the class of functions that $\ALG$ can return (the \textit{hypothesis class}), and note that:
\begin{align*}
&\max_{(f,\cd) \in \ca} \E \left[ L_{f(\cd)} \left(\ALG(f,\cd) \right) \right] \\
&\le \underbrace{\max_{(f,\cd) \in \ca} \min_{h \in \ch} L_{f(\mathcal{D})}(h)}_{\textrm{\textbf{approximation} error}}
+ \underbrace{\max_{(f,\cd)\in \ca} \mean{L_{f(\mathcal{D})}(\ALG(f,\cd))} - \min_{h \in \ch}L_{f(\mathcal{D})}(h)}_{\textrm{\textbf{learning} error}}
\end{align*}
Similarly, it is easy to verify that
\begin{align*}
&\max\left\{\textrm{\textbf{approximation} error}~,~ \textrm{\textbf{learning} error}\right\} ~\le~ \max_{(f,\cd) \in \ca} \E \left[ L_{f(\cd)} \left(\ALG(f,\cd) \right) \right] 
\end{align*}
Therefore, a sufficient condition for $\ALG$ to  be $\epsilon$-successful is that both the approximation error and  learning error are at most $\epsilon/2$, and a necessary condition for $\ALG$ to be $\epsilon$-successful  is that both the approximation error and  learning error are at most $\epsilon$. 

In general, we say that $\ALG$ $\epsilon$-\textit{learns} $\ca$ if it achieves a learning error of at most $\epsilon$ (i.e., returns a hypothesis that is $\epsilon$-competitive with the best hypothesis in $\ch$), and we say that $\ch$ $\epsilon$-\textit{approximates} $\ca$ if $\ch$ has an approximation error of at most $\epsilon$ on $\ca$. The above inequalities show that in order for $\ALG$ to be successful, it must $\epsilon$-learns $\ca$ and at the same time, its hypothesis class must $\epsilon$-approximates $\ca$. However, the problems of \textit{learnability} and \textit{approximation} were typically studied separately in the literature of learning theory.

On the problem of \textit{learnability}, there is rich literature covering possibility and impossibility results in various settings of learning. These settings can be recovered by different choices of $\ca = \cf \times \cp$, where $\cf$ is some class of Boolean functions and $\cp$ is some class of distributions over $\cx$. The \textit{realizable} setting is given by assuming $\cf \subseteq \ch$, and the \textit{agnostic} setting is when $\cf$ is the class of all Boolean functions. In \textit{distribution-free} learning $\cp$ is the class of all distributions, while \textit{distribution-specific} learning assumes $\cp = \{\cd\}$, for some fixed distribution $\cd$. The literature of learning theory also considers various choices for the oracle $\ORACLE(f,\cd)$. The most common choice is the examples oracle, which gives the learner access to random examples sampled i.i.d. from $f(\cd)$. Other oracles calculate statistical queries (estimating $\E_{(\x,y) \sim f(\cd)} \psi(\x,y)$), membership queries (querying for labels of specific examples $\x \in \cx$) or return gradient estimations.

The question of \textit{approximation} has recently received a lot of attention in the machine learning community, with a growing number of works studying the limitations of linear classes, kernel methods and shallow neural networks, in terms of approximation capacity (see Section \ref{sec:hardness_approx}). These results often offer a separation property, showing that one hypothesis class (e.g., deep neural networks) is superior over another (e.g., shallow neural networks or linear classes). However, these questions are typically studied separately from the question of learnability.

In this work, we study the questions of learnability and approximation of general distribution families in a unified framework. We show that a single property, which we call the \textit{variance} of the distribution family, can be used for showing both \textit{hardness of approximation} and \textit{hardnes of learning} results. Specifically, we show: 1) hardness of approximating $\ca$ using any linear class with respect to a convex loss, 2) hardness of approximating some induced function using a shallow (depth-two) neural network with respect to a convex Lipschitz loss, 3) hardness of learning $\ca$ using correlation queries and 4) hardness of learning $\ca$ using gradient-descent. 

Applying our general results to some specific choices of $\ca$, we establish various novel results, in different settings of PAC learning:
\begin{enumerate}
\item Parities are hard to approximate using linear classes, under the uniform distribution.
\item The function $F(\x,\z) = \prod_i (x_i \vee z_i)$ is hard to approximate using depth-two neural networks, under the uniform distribution.
\item DNFs are hard to approximate using linear classes (\textit{distribution-free}).
\item The function $F(\x,\z) = \bigwedge_{i=1}^m \bigvee_{j=1}^{dm^2} (x_{ij} \wedge z_{ij})$ is hard to approximate using depth-two networks, for some fixed distribution.
\item Learning DNFs with correlation queries requires $2^{\Omega(n^{1/3})}$ queries (\textit{distribution-free}).
\item Learning DNFs with noisy gradient-descent requires $2^{\Omega(n^{1/3})}$ gradient steps (\textit{distribution-free}).
\end{enumerate}
We note that the approximation in 1-4 is with respect to a wide range of convex loss functions, and 6 is shown with respect to the hinge-loss.

These results expand our understanding of learnability and approximation of various classes and algorithms. Note that despite the fact that our setting is somewhat different than traditional PAC learning, our results apply in the standard settings of PAC learning (either  \textit{distribution-specific} or \textit{distribution-free} PAC learning). Importantly, they all follow from a single ``hardness'' property of the family $\ca$. We believe this framework can be used to better understand the power and limitations of various learning algorithms.

\section{Related Work}

In this section, we overview different studies covering results on the approximation capacity and learnability of various classes and algorithms used in machine learning. We focus on works that are directly related to the results shown in this paper.

\subsection{Hardness of Approximation}
\label{sec:hardness_approx}
\paragraph{Linear Classes}
The problem of understanding the expressive power of a given class of functions has attracted great interest in the learning theory and theoretical computer science community over the years. A primary class of functions that has been extensively studied in the context of machine learning is the class of linear functions over a fixed embedding (for example, in \cite{ben2002limitations, forster2006smallest, sherstov2008halfspace, razborov2010sign}). Notions such as margin complexity, dimension complexity and sign-rank were introduced in order to bound the minimal dimension and norm required to exactly express a class of functions using linear separators.

However, since the goal of the learner is to approximate a target function (e.g., PAC learning), and not to compute it exactly, showing hardness of exact expressivity often seems irrelevant from a machine learning perspective. Several recent studies show hardness of approximation results on linear classes \citep{allen2019can, allen2020backward, yehudai2019power, daniely2020learning}. These works demonstrate a separation between linear methods and neural networks: namely, they show families of functions that are hard to approximate using linear classes, but are learnable using neural networks. A recent work by \cite{kamath2020approximate} gives probabilistic variants of the dimension and margin complexity in order to show hardness results on approximation using linear classes and kernel methods. We show new results on hardness of approximation using linear classes, extending prior results and techniques.

\paragraph{Neural Networks} Another line of research that has gained a lot of attention in recent years focuses on the limitation of shallow neural networks, in terms of approximation power. The empirical success of deep neural networks sparked many questions regarding the advantage of using deep networks over shallow ones. The works of \cite{eldan2016power} and \cite{daniely2017depth} show examples of real valued functions that are hard to efficiently approximate using depth two (one hidden-layer) networks, but can be expressed using three layer networks, establishing a separation between these two classes of functions.
The work of \cite{martens2013representational} shows an example of a binary function (namely, the inner-product mod-2 function), that is hard to express using depth two networks. Other works study cases where the input dimension is fixed, and show an exponential gap in the expressive power between networks of growing depth  \citep{delalleau2011shallow,pascanu2013number,telgarsky2015representation,
telgarsky2016benefits,cohen2016expressive,raghu2017expressive,montufar2017notes,serra2018bounding,
hanin2019complexity,malach2019deeper}. A recent work by \cite{vardi2020neural} explores the relation between depth-separation results for neural networks, and hardness results in circuit complexity. We derive new results on hardness of approximation using shallow networks, which follow from the general hardness property introduced in the paper.

\subsection{Computational Hardness}

\paragraph{Computational Complexity} Since the early works in learning theory, understanding which problems can be learned efficiently (i.e., in polynomial time) has been a key question in the field. Various classes of interest, such as DNFs, boolean formulas, decision trees, boolean circuits and neural networks are known to be computationally hard to learn, in different settings of learning (see e.g., \cite{kearns1998efficient}). In the case of DNF formulas, the best known algorithm for learning DNFs, due to \cite{klivans2004learning}, runs in time $2^{\tilde{O}(n^{1/3})}$. A work by \cite{daniely2016complexity} shows that DNFs are computationally hard to learn, by reduction from the random variant of the K-SAT problem. However, this work does not yet establish a computational lower bound that matches the upper bound in \cite{klivans2004learning}, and assumes some non-standard hardness assumption. Using our framework we establish a lower bound of $2^{\Omega(n^{1/3})}$, for some restricted family of algorithms, on the complexity of learning DNF formulas.

\paragraph{Statistical Queries} In the general setting of PAC learning, the learner gets a sample of examples from the distribution, which is used in order to find a good hypothesis. Statistical Query (SQ) learning is a restriction of PAC learning, where instead of using a set of examples, the learner has access to estimates of statistical computations over the distribution, that are accurate up to some tolerance. This framework can be used to analyze a rich family of algorithms, showing hardness results on learning various problems from statistical-queries. Specifically, it was shown that parities, DNF formulas and decision trees cannot be learned efficiently using statistical-queries, under the uniform distribution \citep{kearns1998efficient,blum1994weakly,blum2003noise,goel2020superpolynomial}. In the case of DNF formulas, the number of queries required to learn this concept class under the uniform distribution is quasi-polynomial in the dimension (i.e., $n^{O(\log n)}$, and see \cite{blum1994weakly}). However, we are unaware of any work showing SQ lower-bounds on learning DNFs under general distributions, as we do in this work.

An interesting variant of statistical-query algorithms is algorithms that use only correlation statistical-queries (CSQ), namely --- queries of the form $\Mean{(\x,y) \sim \cd}{\phi(\x) y}$ \citep{feldman2008evolvability}. While in the distribution-specific setting, the CSQ and the SQ models are equivalent \citep{bshouty2002using}, in the distribution-independent setting this is not the case. It has been shown that conjunctions are hard to \textit{strongly} learn using CSQ in the distribution-independent setting \citep{feldman2011distribution,feldman2012complete}. We show hardness of \textit{weak} learning using CSQ, for a general choice of distribution families.

\paragraph{Gradient-based Learning} Another line of works shows hardness results for learning with gradient-based algorithms. The work of \cite{shalev2017failures} and the work of \cite{abbe2018provable} show that parities are hard to learn using gradient-based algorithms. The work of \cite{shamir2018distribution} shows distribution-specific hardness results on learning with gradient-based algorithms. These works essentially show that gradient-descent is ``stuck'' at a sub-optimal point. The work of \cite{safran2018spurious} shows a natural instance of learning neural network which suffers from spurious local minima. We show that DNF formulas are hard to learn using gradient-descent, using a generalization of the techniques used in previous works.

\section{Problem Setting}
\label{sec:problem_setting}

We now describe in more detail the general setting for the problem of learning families of labeled distributions. Let $\cx$ be our input space, where we typically assume that $\cx = \{\pm 1\}^n$. We define the following:
\begin{itemize}
\item A (labeled) \textbf{distributions family} $\ca$ is a set of pairs $(f, \cd)$, where $f$ is a function from $\cx$ to $\{\pm 1\}$, and $\cd$ is some distribution over $\cx$.
We denote by $f(\cd)$ the distribution over $\cx \times \{\pm 1\}$ of labeled examples $(\x,y)$, where $\x \sim \cd$ and $y = f(\x)$ (equivalently, $f(\cd)(\x,y) = \ind_{y = f(\x)} \cd(\x)$).
\item A \textbf{hypothesis class} $\ch$ is some class of functions from $\cx$ to $\reals$.
\end{itemize}

Throughout the paper, we analyze the approximation of the distribution family $\ca$ with respect to some loss function $\ell : \reals \times \{\pm 1\} \to \reals_+$. Some popular loss functions that we consider explicitly:
\begin{itemize}
\item Hinge-loss: $\ell^{\mathrm{hinge}}(\hat{y},y) := \max \{1-\hat{y}y,0 \}$.
\item Square-loss: $\ell^{\mathrm{sq}}(\hat{y},y) := \max (y-\hat{y})^2$.
\item Zero-one loss: $\ell^{\mathrm{0-1}}(\hat{y},y) = \ind\{\sign(\hat{y}) = y\}$.
\end{itemize}
All of our results hold for the hinge-loss, which is commonly used for learning classification problems. Most of our results apply for other loss functions as well. The exact assumptions on the loss functions for each result are detailed in the sequel.

 For some hypothesis $h \in \ch$, some pair $(f,\cd) \in \ca$, and some loss function $\ell$, we define the loss of $h$ with respect to $(f,\cd)$ to be:
\[
L_{f(\cd)}(h) = \E_{(\x,y) \sim f(\cd)}{\ell(h(\x),y)}
= \E_{\x \sim \cd}{\ell(h(\x),f(\x))}
\]
Our primary goal is to $\epsilon$-\textit{succeed} on the family $\ca$ using the hypothesis class $\ch$. Namely, for any choice of $(f,\cd) \in \ca$, given access to the distribution $f(\cd)$ (via sampling, statistical-queries or gradient computations), return some hypothesis $h \in \ch$ with  $\E \left[ L_{f(\cd)}(h) \right] \le \epsilon$. To understand whether or not it is possible to succeed on $\ca$ using $\ch$, there are two basic questions that we need to account for:
\begin{itemize}
\item \textbf{Approximation}: for every $(f,\cd) \in \ca$, show that there exists $h \in \ch$ with $L_{f(\cd)}(h) \le \epsilon$. In other words, we want to bound:
\[
\max_{(f,\cd) \in \ca} \min_{h \in \ch} L_{f(\cd)}(h)
\]
\item \textbf{Efficient Learnability}: show an algorithm s.t. for every $(f,\cd) \in \ca$, given access to $f(\cd)$, returns in time polynomial in $n, 1/\epsilon$ a hypothesis $h \in \ch$ with:
\[
\E \left[L_{f(\cd)}(h)\right]-\min_{\hat{h} \in \ch} L_{f(\cd)}(\hat{h}
) \le \epsilon
\]
\end{itemize}
Clearly, if $\ch$ cannot approximate $\ca$, then no algorithm, efficient or inefficient, can succeed on $\ca$ using $\ch$. However, even if $\ch$ can approximate $\ca$, it might not be efficiently learnable.

Before we move on, we give a few comments relating this setting to standard settings in learning theory:
\begin{remark}
A very common assumption in learning is \textbf{realizability}, namely --- assuming that for every $(f,\cd) \in \ca$, we have $f \in \ch$. When assuming realizability, the question of approximation becomes trivial, and we are left with the question of learnability. In our setting, we do not assume realizability using $\ch$, but we do assume that the family is realizable using some concept class (not necessarily the one that our algorithm uses).
\end{remark}

\begin{remark}
There are two common settings in learning theory: \textbf{distribution-free} and \textbf{distribution-specific} learning. In the distribution-free setting, we assume that $\ca = \cf \times \cp$, where $\cp$ is the class of all distributions over $\cx$, and $\cf$ is some class of boolean functions over $\cx$. In the distribution-specific setting, we assume that $\ca = \cf \times \{\cd\}$, where $\cd$ is some fixed distribution over $\cx$ (say, the uniform distribution). In a sense, we consider a setting that generalizes both distribution-free and distribution-specific learning.
\end{remark}

\section{Approximation and Learnability of Orthogonal Classes}
\label{sec:approx_orth}

We start by considering the simple case of a distribution-specific family of orthogonal functions. Fix some distribution $\cd$ over $\cx$. We define $\inner{f,g}_{\cd} = \Mean{\x \sim \cd}{f(\x)g(\x)}$, and let $\cf$ be some set of functions from $\cx$ to $\{\pm 1\}$ that are orthonormal with respect to $\inner{\cdot, \cdot}_{\cd}$. Namely, for every $f, g \in \cf$ we have $\inner{f,g}_{\cd} = \ind\{f = g\}$. For example, take $\cd$ to be the uniform distribution over $\cx = \{\pm 1\}^n$, and $\cf$ to be a set of parities, i.e. functions of the form $f_I(\x) = \prod_{i \in I} x_i$, for $I \subseteq [n]$. Then, we observe the orthonormal family $\ca = \cf \times \{\cd\}$.

First, we introduce the basic property of the family $\ca$ that will allow us to derive the various results shown in this section. Fix some function $\phi : \cx \to [-1,1]$, and observe the \textit{variance}\footnote{This is the variance per se only in the case where $\E \inner{f,\phi} = 0$, but we will refer to this quantity as variance in other cases as well.} of $\inner{f,\phi}_\cd$, over all the pairs $(f,\cd) \in \ca$:
\[
\Var(\ca,\phi) := \Mean{(f,\cd) \sim \ca}{\inner{f,\phi}_\cd^2}
\]
We can define the ``variance'' of $\ca$ by taking a supremum over all choices of $\phi$, namely:
\[
\Var(\ca) := \sup_{\norm{\phi}_\infty \le 1} \Var(\ca,\phi)
\]

Now, for the specific choice of orthonormal family $\ca$, we get that, using Parseval's identity:
\[
\Var(\ca, \phi) = \Mean{(f,\cd) \sim \ca}{\inner{f,\phi}_\cd^2}
= \frac{1}{\abs{\cf}} \sum_{f \in \cf} \inner{f,\phi}_{\cd}^2 \le \frac{\norm{\phi}_{\cd}^2}{\abs{\cf}} ~. 
\]
Since $\norm{\phi}_{\cd}^2 \le \|\phi\|_\infty^2$, it follows that
\[
\Var(\ca) \le \frac{1}{\abs{\cf}} ~.
\]
So, as $\abs{\cf}$ grows, $\Var(\ca)$ decreases. In fact, if we take $\cf$ to be all the parities over $\cx$, then $\Var(\ca)$ becomes exponentially small (namely, $2^{-n}$). We will next show how we can use $\Var(\ca)$ to bound the approximation and learnability of $\ca$ using various classes and algorithms.

\subsection{Approximation using Linear Classes}

Fix some embedding $\Psi : \cx \to [-1,1]^N$ and consider the family of linear predictors over this embedding: 
\[
\ch_\Psi = \{\x \mapsto \inner{\Psi(\x),\bw} ~:~ \norm{\bw}_2 \le B\}
\]
This is often a popular choice of a hypothesis class. We start by analyzing it's approximation capacity, with respect to some orthogonal family $\ca$. We rely on a very simple observation, that will be the key of the analysis in this section. Let $\ell$ be some convex loss function, satisfying $\ell(0,y) = \ell_0$ and $\ell'(0,y) = -y$. Fix some $(f,\cd) \in \ca$, and for every $\bw \in \reals^N$, define $L_{f(\cd)}(\bw) = L_{f(\cd)}(h_\bw)$, where $h_\bw(\x) = \inner{\Psi(\x),\bw}$. Since $\ell$ is convex, we get that $L_{f(\cd)}$ is convex as well. Therefore, for every $\bw \in \reals^N$ with $\norm{\bw}_2 \le B$ we have:
\begin{align*}
L_{f(\cd)}(\bw) &\ge L_{f(\cd)}(\Zero) + \inner{\nabla L_{f(\cd)}(\Zero), \bw} \\
&\ge^{C.S} L_{f(\cd)}(\Zero) -\norm{\nabla L_{f(\cd)}(\Zero)} \norm{\bw} \ge \ell_0 - B \norm{\nabla L_{f(\cd)}(\Zero)}
\end{align*}
This immediately gives a lower bound on the approximation of $\ca$ using $\ch_{\Psi}$:
\begin{equation}
\label{eq:lin_approx_bound}
\begin{split}
\E_{(f,\cd) \sim \ca} \min_{h \in \ch_\Psi} L_{f(\cd)}(h)
= \E_{(f,\cd) \sim \ca} \min_{\norm{\bw} \le B} L_{f(\cd)}(\bw)  \ge \ell_0 - B \E_{(f,\cd) \sim \ca} \norm{\nabla L_{f(\cd)}(\Zero)}
\end{split}
\end{equation}
So, upper bounding the average gradient norm, w.r.t. a random choice of $(f,\cd) \in \ca$, gives a lower bound on approximating $\ca$ with $\ch_\Psi$. Now, using our definition of $\Var(\ca)$ we get:
\begin{align*}
\E_{(f,\cd) \sim \ca} \left[\norm{\nabla L_{f(\cd)}(\Zero)}^2\right]
&= \E_{(f,\cd) \sim \ca} \left[ \sum_{i \in [N]} \left(\E_{\x \sim \cd}\ell'(0,f(\x)) \Psi(\x)_i\right)^2 \right] \\
&= \sum_{i \in [N]} \Mean{(f,\cd) \sim \ca}{\inner{\Psi_i,f}_{\cd}^2} \le N \cdot \Var(\ca)
\end{align*}
Using Jensen's inequality gives $\E_{(f,\cd) \sim \ca} \norm{\nabla L_{f(\cd)}(\Zero)} \le \sqrt{N} \sqrt{\Var(\ca)}$, and plugging in to Eq. (\ref{eq:lin_approx_bound}) we get:
\begin{equation}
\label{eq:lin_approx_lowerbound}
\max_{(f,\cd) \in \ca} \min_{h \in \ch_\Psi} L_{f(\cd)}(h)
\ge \E_{(f,\cd) \sim \ca} \min_{h \in \ch_\Psi} L_{f(\cd)}(h)
\ge \ell_0- B \sqrt{N} \sqrt{\Var(\ca)}
\end{equation}
The above result is in fact quite strong: it shows a bound on approximating the class $\ca$ using any choice of linear class (i.e., linear function over fixed embedding), and any convex loss functions (satisfying our mild assumptions). For example, it shows that any linear class $\ch_\Psi$ of polynomial size (with $B,N$ polynomial in $n$) cannot even \textit{weakly} approximate the family of parities over $\cx$. The loss of any linear class in this case will be effectively $\ell_0$, that is --- the loss of a constant-zero function. This extends the result of \cite{kamath2020approximate}, showing a similar result for the square-loss only.

\subsection{Approximation using Shallow Neural Networks}

The previous result shows a hardness of approximation, and hence a hardness of learning, of any family of orthogonal functions, using a linear hypothesis class. Specifically, we showed that approximating parities over $\cx$ is hard using any linear class. We now move to a more complex family of functions: depth-two (one hidden layer) neural networks. Given some activation $\sigma$, we define the class of depth-two networks by:
\[
\ch_{2\mathrm{NN}} = \left\lbrace\x \mapsto \sum_{i=1}^{k} u_i \sigma \left( \inner{\bw^{(i)}, \x} + b_i \right) ~:~ \norm{\bw^{(i)}}_2, \norm{\bu}_2, \norm{\bb}_2 \le R \right\rbrace
\]
It has been shown (e.g., \cite{shalev2017failures}) that $\ch_{2\mathrm{NN}}$ can implement parities over $\cx$. Therefore, together with Eq. (\ref{eq:lin_approx_lowerbound}) shown previously, this gives a \textit{strong} separation between the class of depth-two networks and \textbf{any} linear class: while parities can be implemented exactly by depth-two networks, using a linear class they cannot even be approximated beyond a trivial hypothesis.

However, we can leverage the previous results to construct a function that \textbf{cannot} be approximated using a depth-two network. Our construction will be as follows: let $\cz \subseteq \{\pm 1\}^n$ be some subspace, and define some bijection $\varphi : \cz \to \ca$. Observe the function $F : \cx \times \cz \to \{\pm 1\}$ defined as $F(\x,\z) = \varphi(\z)_1(\x)$, and the distribution $\cd'$ over $\cx \times \cz$ where $(\x,\z) \sim \cd'$ is given by sampling $(f,\cd) \sim \ca$ uniformly, sampling $\x \sim \cd$ and setting $\z = \varphi^{-1}(f,\cd)$. We call the function $F$ the induced function and the distribution $\cd'$ the induced distribution.

Following our general definition of $\ch_{2\mathrm{NN}}$, we define a depth-two neural-network over $\cx \times \cz$ by:
\[
g(\x, \z) = \sum_{i=1}^k u_i \sigma\left(\inner{\bw^{(i)}, \x} + \inner{\bv^{(i)},\z} + b_i\right), ~ \norm{\bw^{(i)}}_2, \norm{\bv^{(i)}}_2, \norm{\bu}_2, \norm{\bb}_2 \le R
\]
In this case, we show the following result:
\begin{theorem}
\label{thm:shallow_approx_lowerbound}
Fix some distribution family $\ca$. Let $\ell$ be a 1-Lipschitz convex loss satisfying $\ell(0,y) = \ell_0$, $\ell'(0,y) = -y$. Then, every depth-two neural network $g : \cx \times \cz \to \reals$ with any $1$-Lipschitz activation satisfies:
\[
L_{F(\cd')}(g) \ge \ell_0 - 6\sqrt{k}R^{2}n^{5/6} \Var(\ca)^{1/3}
\]
where $F$ and $\cd'$ are induced from $\ca$.
\end{theorem}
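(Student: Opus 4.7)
The proof will mirror the linear-class analysis of Eq.~(\ref{eq:lin_approx_lowerbound}) after a decoupling step that separates the $\z$-dependence of the network from the randomness over $\ca$. By convexity of $\ell$ together with $\ell(0,y)=\ell_0$ and $\ell'(0,y)=-y$, the pointwise inequality $\ell(\hat y, y) \ge \ell_0 - y \hat y$ gives
\[
L_{F(\cd')}(g) \;\ge\; \ell_0 - \E_{(\x,\z)\sim\cd'}[F(\x,\z)\,g(\x,\z)],
\]
so it suffices to upper bound $|\E[F g]|$. Writing $g = \sum_{i=1}^k u_i \phi_i$ with $\phi_i(\x,\z) = \sigma(\inner{\bw^{(i)},\x} + \inner{\bv^{(i)},\z} + b_i)$ and applying Cauchy--Schwarz in $\bu$, the task reduces to bounding $|\E[F \phi]|$ for a single neuron, at the cost of a $\sqrt{k}\,R$ factor.

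For a single neuron, the correlation is $\E[F \phi] = \E_{(f,\cd)\sim\ca}\,\E_{\x\sim\cd}[f(\x)\,\sigma(\inner{\bw,\x}+c(\z))]$, where $c(\z) := \inner{\bv,\z}+b$ is a scalar with $|c(\z)| \le L$ for $L = O(R\sqrt{n})$. The obstruction is that the effective test function $\x \mapsto \sigma(\inner{\bw,\x}+c(\z))$ depends on $(f,\cd)$ through $\z$, while $\Var(\ca)$ only controls correlations with fixed $\x$-test functions that do \emph{not} depend on $(f,\cd)$. I decouple by partitioning $[-L,L]$ into $M = O(L/\delta)$ cells of width $\delta$; letting $t_{j(\z)}$ denote the representative of the cell containing $c(\z)$ and $\sigma_j(\x) := \sigma(\inner{\bw,\x}+t_j)$, the 1-Lipschitz property of $\sigma$ yields $|\phi(\x,\z) - \sigma_{j(\z)}(\x)| \le \delta/2$. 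Each $\sigma_j$ is now a fixed function of $\x$ alone with $\|\sigma_j\|_\infty \le C = O(R\sqrt{n})$, so $\E_{(f,\cd)}[\inner{f,\sigma_j}_\cd^2] \le C^2 \Var(\ca)$ directly from the definition of $\Var(\ca)$.

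Splitting $\E[F\,\sigma_{j(\z)}(\x)]$ by the value of $j(\z) \in [M]$, applying Jensen within each cell to pass to second moments, and then Cauchy--Schwarz over the $M$ cells, yields
\[
|\E[F \phi]| \;\le\; \tfrac{\delta}{2} \;+\; C \sqrt{M\,\Var(\ca)} \;=\; O\!\left(\delta + R^{3/2} n^{3/4} \delta^{-1/2} \Var(\ca)^{1/2}\right).
\]
Optimizing over $\delta$ balances these two terms and gives a per-neuron bound polynomial in $R$, $n$, and $\Var(\ca)^{1/3}$; combining with the $\sqrt{k}\,R$ factor from the per-neuron reduction then produces the claimed form. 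The main obstacle is precisely the decoupling step: the scalar nature of the $\z$-contribution $c(\z) = \inner{\bv,\z}+b$, combined with the 1-Lipschitz activation, is what makes the $\delta$-quantization feasible and reduces the problem to finitely many fixed $\x$-test functions to which $\Var(\ca)$ directly applies. The remaining ingredients (convexity, Cauchy--Schwarz in $\bu$, Jensen, and optimization of $\delta$) are standard.
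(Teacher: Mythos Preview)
Your argument is correct and shares the paper's core idea: quantize the $\z$-contribution of each neuron so that only finitely many fixed $\x$-test functions arise, then invoke the $\Var(\ca)$ bound and optimize the quantization scale. The differences are in packaging and in where the quantization is applied. The paper first discretizes the weight \emph{vectors} $\bv^{(i)}\in\reals^n$ to a $\Delta$-grid (Lemma~\ref{lem:shallow_approx_integer}), observes that $\inner{\hat\bv^{(i)},\z}$ then ranges over $O(R\sqrt{n}/\Delta)$ values, builds a single embedding $\Psi$ for the whole network, and applies the linear-class bound~(\ref{eq:lin_approx_lowerbound}) as a black box; the discretization error is $\|\bv^{(i)}-\hat\bv^{(i)}\|\,\|\z\|\le \Delta n$. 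You instead discretize the \emph{scalar} $c(\z)=\inner{\bv,\z}+b$ directly into cells of width $\delta$, work per-neuron, and apply $\Var(\ca)$ by hand via Cauchy--Schwarz over the cells. Because your discretization error is $\delta$ rather than $\Delta n$, balancing against the same $\sqrt{M\,\Var(\ca)}$ term (with $M=O(R\sqrt{n}/\delta)$ cells, same count as the paper) actually yields a slightly better exponent, roughly $\sqrt{k}R^2 n^{1/2}\Var(\ca)^{1/3}$ instead of $\sqrt{k}R^2 n^{5/6}\Var(\ca)^{1/3}$; this of course still implies the stated bound. A minor remark: the step you call ``Jensen within each cell'' is really Cauchy--Schwarz, $|\E[\ind\{j(\z)=j\}\inner{f,\sigma_j}_\cd]|\le \sqrt{p_j}\,\sqrt{\E\inner{f,\sigma_j}_\cd^2}$, followed by $\sum_j\sqrt{p_j}\le\sqrt{M}$.
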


We will start by showing this result in the case where $\bv^{(i)}$ takes discrete values:

\begin{lemma}
\label{lem:shallow_approx_integer}
Assume that there exists $\Delta > 0$ such that $v_j^{(i)} \in \Delta \integers := \{\Delta \cdot z ~:~ z \in \integers \}$ for every $i,j$ and $\norm{\bu^{(i)}},\norm{\bw^{(i)}},\norm{\bv^{(i)}}, \norm{\bb} < R$. Then:
\[
L_{F(\cd')}(g) = \mean{\ell(g(\x,\z),F(\x,\z))} \ge \ell_0 - 3\sqrt{2k }R^{5/2} n^{3/4} \sqrt{\frac{\Var(\ca)}{\Delta}}
\]
\end{lemma}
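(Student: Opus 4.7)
My plan is to reduce the claimed approximation lower bound to a correlation upper bound via convexity of $\ell$, and then bound the correlation by exploiting the fact that, for each neuron $i$, the ``$\z$-bias'' $\inner{\bv^{(i)},\z}$ takes values in a discrete set of controllable size. Concretely, since $\ell$ is convex with $\ell(0,y) = \ell_0$ and $\ell'(0,y) = -y$, the first-order inequality yields $\ell(a,y) \ge \ell_0 - ya$ for every $a \in \reals$ and $y \in \{\pm 1\}$; applied pointwise to $g(\x,\z)$ and averaged over $(\x,\z) \sim \cd'$, this gives
\[
L_{F(\cd')}(g) \ge \ell_0 - \E_{(f,\cd) \sim \ca} \E_{\x \sim \cd}\bigl[\,f(\x)\, g(\x, \varphi^{-1}(f,\cd))\,\bigr],
\]
so it suffices to upper bound the absolute value of the correlation term by $3\sqrt{2k}\,R^{5/2}\,n^{3/4}\sqrt{\Var(\ca)/\Delta}$. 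This is the same convexity trick that underlies Eq.~(\ref{eq:lin_approx_bound}), except applied directly at the level of the input (since $g$ is not convex in its parameters).

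Next, expanding $g$ neuron-wise and writing $\z = \varphi^{-1}(f,\cd)$, I would analyze each per-neuron correlation
\[
T_i := \E_{(f,\cd)} \inner{f,\, \sigma(\inner{\bw^{(i)},\cdot} + \inner{\bv^{(i)},\z} + b_i)}_\cd
\]
separately. The key structural observation is that because $v_j^{(i)} \in \Delta\integers$ and $\z \in \{\pm 1\}^n$, the inner product $\inner{\bv^{(i)},\z}$ always lies in $\Delta\integers \cap [-R\sqrt{n}, R\sqrt{n}]$, a discrete set of cardinality at most $M := 2R\sqrt{n}/\Delta + 1$. Letting $c_1,\ldots,c_M$ denote its possible values, $p_j^{(i)} := \Pr_{(f,\cd)}[\inner{\bv^{(i)},\z} = c_j]$, and $\phi_{i,j}(\x) := \sigma(\inner{\bw^{(i)},\x} + c_j + b_i)$, the correlation decomposes as
\[
T_i = \sum_{j=1}^M \E_{(f,\cd)}\bigl[\,\ind\{\inner{\bv^{(i)},\z} = c_j\}\, \inner{f, \phi_{i,j}}_\cd\,\bigr],
\]
where each $\phi_{i,j}$ is a single fixed function of $\x$ with $\|\phi_{i,j}\|_\infty \le 3R\sqrt{n}$ by 1-Lipschitzness of $\sigma$ and the norm constraints on $\bw^{(i)}, \bb$ and the range of $c_j$ (WLOG $\sigma(0)=0$, absorbing any constant shift).

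Finally, I would apply two rounds of Cauchy-Schwarz. For each fixed $(i,j)$, Cauchy-Schwarz over $(f,\cd)$ combined with the homogeneity of $\Var(\ca,\cdot)$ gives
\[
\bigl|\E_{(f,\cd)}[\ind\{\inner{\bv^{(i)},\z} = c_j\}\,\inner{f,\phi_{i,j}}_\cd]\bigr| \le \sqrt{p_j^{(i)}}\, \|\phi_{i,j}\|_\infty\, \sqrt{\Var(\ca)}.
\]
Summing over $j$ and using Cauchy-Schwarz against $(\sqrt{p_j^{(i)}})_j$ with $\sum_j p_j^{(i)}=1$ yields $\sum_j \sqrt{p_j^{(i)}} \le \sqrt{M}$, whence $|T_i| \le 3R\sqrt{n}\,\sqrt{M\,\Var(\ca)}$. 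Combining across the $k$ neurons via $|\sum_i u_i T_i| \le \|\bu\|_2 \sqrt{\sum_i T_i^2} \le R\sqrt{k}\, \max_i |T_i|$ and plugging in $M \le 2R\sqrt{n}/\Delta$ yields the claimed constant $3\sqrt{2k}$.

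I expect the main obstacle to be essentially bookkeeping: tracking sharp constants through the two nested Cauchy-Schwarz applications and the neuron combination, and cleanly accounting for $|\sigma(0)|$ when rescaling $\phi_{i,j}$ to apply the definition of $\Var(\ca)$. Conceptually, however, the heart of the argument is the nested Cauchy-Schwarz: the $\sqrt{M}$ factor from summing over the $M$ possible discrete values of $\inner{\bv^{(i)},\z}$ is precisely what produces the $1/\sqrt{\Delta}$ dependence in the final bound, transparently linking finer discretization of $\bv$ to weaker approximation guarantees.
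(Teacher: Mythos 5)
Your proposal is correct and follows essentially the same route as the paper's proof: in both arguments the heart is that, since $v^{(i)}_j \in \Delta\integers$ and $\z \in \{\pm 1\}^n$, each $\inner{\bv^{(i)},\z}$ ranges over at most $O(R\sqrt{n}/\Delta)$ values, so every neuron's output is one of finitely many fixed functions $\x \mapsto \sigma(\inner{\bw^{(i)},\x} + c + b_i)$ of sup-norm at most $3R\sqrt{n}$, whose correlations with $f$ are controlled by $\Var(\ca)$ via Cauchy--Schwarz after the first-order convexity bound $\ell(a,y) \ge \ell_0 - ya$. The only difference is packaging: the paper assembles these functions into an explicit embedding $\Psi$ and invokes the linear-class bound of Eq.~(\ref{eq:lin_approx_lowerbound}) with $B = 3R^2\sqrt{n}$ and $N = 2k\lfloor R\sqrt{n}/\Delta\rfloor$, whereas you bound the correlation directly by conditioning on $\inner{\bv^{(i)},\z} = c_j$ and nesting Cauchy--Schwarz over $(f,\cd)$ and over $j$; both computations give the identical constant $3\sqrt{2k}R^{5/2}n^{3/4}\sqrt{\Var(\ca)/\Delta}$, and your implicit use of $\sigma(0)=0$ (so that $\abs{\sigma(t)} \le \abs{t}$) matches an assumption the paper's own proof also makes silently.
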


\begin{proof}
The key for proving Lemma \ref{lem:shallow_approx_integer} is to reduce the problem of approximating $(F,\cd')$ using a shallow network to the problem of approximating $\ca$ using some linear class. That is, we fix some $\bw^{(i)}, \bv^{(i)} \in \reals$, and find some $\Psi : \cx \to [-1,1]^N$ such that $g(\x,\z) = \inner{\Psi(\x), \bu(\z)}$, for some $\bu(\z) \in \reals^N$. 

To get this reduction, we observe that since $\bv^{(i)}$ is discrete, $\inner{\z,\bv^{(i)}}$ can take only a finite number of values. In fact, we have $\inner{\z,\bv^{(i)}} \in [-\sqrt{n}R,\sqrt{n}R] \cap \Delta \integers$.
Indeed, fix some $i$ and we have $\frac{1}{\Delta} \bv^{(i)} \in \integers^n$, and since $\z \in \integers^n$ we have $\frac{1}{\Delta} \inner{\bv^{(i)}, \z} = \inner{\frac{1}{\Delta} \bv^{(i)}, \z} \in \integers$. So, we can map $\x$ to $\sigma(\inner{\bw^{(i)},\x} + j + b_i)$, for all choices of $j \in [-\sqrt{n} R, \sqrt{n} R] \cap \Delta \integers$, and get an embedding that satisfies our requirement. That is, we use the fact that $\inner{\bw^{(i)},\z}$ ``collapses'' to a small number of values to remove the dependence of $g(\x,\z)$ in the exact value of $\z$.

To show this formally, for every $\z \in \cz$ denote $j(\z) = \inner{\bv^{(i)},\z}$, and so from what we showed $j(\z) \in [-R\sqrt{n},R\sqrt{n}] \cap \Delta\integers$. 
Define $\Psi_{i,j}(\x) = \frac{1}{3R\sqrt{n}}\sigma\left(\inner{\bw^{(i)},\x} + j + b_i\right)$ for every $i \in [k]$ and $j \in [-R\sqrt{n},R\sqrt{n}]\cap \Delta\integers$, and note that:
\[
\abs{\Psi_{i,j}(\x)} \le \frac{1}{3R\sqrt{n}}\abs{\inner{\bw^{(i)},\x} + j + b_i}
\le \frac{1}{3R\sqrt{n}} \left(\norm{\bw^{(i)}}\norm{\x} + \abs{j} + \abs{b_i} \right) \le 1
\]
Notice that $\abs{\left[-R\sqrt{n}, R\sqrt{n} \right]\cap \Delta \integers} \le 2\left\lfloor \frac{R\sqrt{n}}{\Delta} \right \rfloor$, and so there are at most $2 \left\lfloor \frac{R\sqrt{n}}{\Delta} \right \rfloor$ choices for $j$.
Denote $N := 2k\lfloor \frac{R\sqrt{n}}{\Delta} \rfloor$ and let $\Psi : \mathcal{X} \to [-1,1]^N$ defined as $\Psi(\x) = [\Psi_{i,j}(\x)]_{i,j}$ (in vector form). Denote $B = 3R^2\sqrt{n}$, and from Eq. (\ref{eq:lin_approx_lowerbound}):
\begin{align*}
\E_{\z \sim U(\cz)} \left[\min_{\norm{\hat{\bu}}\le B} \E_{(\x,y) \sim \varphi(\z)} \ell(\inner{\hat{\bu},\Psi(\x)}, y)\right] 
&= \E_{\z} \left[\min_{h \in \mathcal{H}_\Psi^B} L_{\varphi(\z)}(h)\right] \\
&= \E_{(f,\cd) \sim \ca} \left[\min_{h \in \mathcal{H}_\Psi^B} L_{f(\cd)}(h)\right]
\ge \ell_0-B \sqrt{N} \sqrt{\Var(\ca)}
\end{align*}
Notice that $g(\x,\z) = \sum_{i=1}^k 3R\sqrt{n}u_i \Psi_{i,j(\z)}(\x) = \inner{\bu(\z), \Psi(\x)}$ where: 
\[
\bu(\z)_{i,j} = \begin{cases} 3R\sqrt{n}u_i & j = j(\z) \\ 0 & j \ne j(\z) \end{cases}\]
Since $\norm{\bu(\z)} \le 3R \sqrt{n} \norm{\bu} \le B$ we get that:
\begin{align*}
\E_{\z \sim U(\cz), \x \sim \varphi(\z)} \left[\ell(g(\x,\z), F(\x,\z))\right]
&= \E_{\z \sim U(\cz), \x \sim \varphi(\z)} \left[\ell(\inner{\bu(\z), \Psi(\x)}, F(\x,\z))\right] \\
&\ge \E_{\z \sim U(\cz)} \left[\min_{\norm{\hat{\bu}}\le B} \E_{(\x,y) \sim \varphi(\z)}\ell(\inner{\hat{\bu},\Psi(\x)}, y)\right] \\
&\ge \ell_0-B\sqrt{N}\sqrt{\Var(\ca)}
\end{align*}
\end{proof}

%
%

Now, to prove Theorem \ref{thm:shallow_approx_lowerbound}, we use the fact that a network with arbitrary (bounded) weights can be approximated by a network with discrete weights.

\begin{proof} of Theorem \ref{thm:shallow_approx_lowerbound}.
Fix some $\Delta \in (0,1)$, and let $\hat{\bv}^{(i)} = \Delta \left \lfloor \frac{1}{\Delta} \bv^{(i)} \right\rfloor \in \Delta \integers^n$, where $\left\lfloor \cdot\right\rfloor$ is taken element-wise.
Notice that for every $j$ we have:
\[
\abs{v^{(i)}_j - \hat{v}^{(i)}_j} = \abs{v^{(i)}_j - \Delta \left\lfloor \frac{1}{\Delta} v^{(i)}_j \right \rfloor} = \Delta \abs{\frac{1}{\Delta} v^{(i)}_j - \left\lfloor \frac{1}{\Delta} v^{(i)}_j \right \rfloor} \le \Delta
\]
Observe the following neural network:
\[
\hat{g}(\x, \z) = \sum_{i=1}^k u_i \sigma\left(\inner{\bw^{(i)}, \x} + \inner{\hat{\bv}^{(i)},\z} + b_i\right)
\]
For every $\x, \z \in \mathcal{X}$, using Cauchy-Schwartz inequality, and the fact that $\sigma$ is $1$-Lipchitz:
\begin{align*}
\abs{g(\x,\z) - \hat{g}(\x,\z)} &\le \norm{\bu} \sqrt{ \sum_{i=1}^k \abs{\sigma\left(\inner{\bw^{(i)}, \x} + \inner{\bv^{(i)},\z} + b_i\right) - \sigma\left(\inner{\bw^{(i)}, \x} + \inner{\hat{\bv}^{(i)},\z} + b_i\right)}^2} \\
&\le \norm{\bu} \sqrt{ \sum_{i=1}^k \abs{\inner{\bv^{(i)},\z} -  \inner{\hat{\bv}^{(i)},\z}}^2} \\
&\le \norm{\bu} \sqrt{ \sum_{i=1}^k \norm{\bv^{(i)}- \hat{\bv}^{(i)}}^2\norm{\z}^2} \le R \sqrt{k} \Delta n
\end{align*}
Now, by Lemma \ref{lem:shallow_approx_integer} we have:
\[
L_{F(\cd')}(\hat{g}) \ge \ell_0-3\sqrt{2k }R^{5/2} n^{3/4}  \sqrt{\frac{\Var(\ca)}{\Delta}}
\]
And using the fact that $\ell$ is $1$-Lipschitz we get:
\begin{align*}
L_{F(\cd')}(g) &= \mean{\ell(g(\x,\z),F(\x,\z))} \\
&\ge \mean{\ell(\hat{g}(\x,\z),F(\x,\z))} - \mean{\abs{\ell(g(\x,\z),F(\x,\z))-\ell(\hat{g}(\x,\z),F(\x,\z))}} \\
&\ge L_{F(\cd')}(\hat{g}) - \mean{\abs{g(\x,\z)-\hat{g}(\x,\z)}} 
\ge \ell_0-3\sqrt{2k }R^{5/2} n^{3/4}  \sqrt{\frac{\Var(\ca)}{\Delta}} - R \sqrt{k} \Delta n
\end{align*}
This is true for any $\Delta > 0$, so we choose $\Delta = \frac{3^{2/3} 2^{1/3} R}{n^{1/6}}\Var(\ca)^{1/3}$ and we get:
\[
L_{F(\cd')}(g) \ge \ell_0 - 2^{4/3}3^{2/3}\sqrt{k}R^{2}n^{5/6}\Var(\ca)^{1/3}
\]
\end{proof}

\subsubsection{Hardness of approximation of inner-product mod 2}
We now interpret the result of Theorem \ref{thm:shallow_approx_lowerbound} for the case where $\ca$ is the family of parities over $\cx$ with respect to the uniform distribution. In this case, we can define $\cz = \cx = \{\pm 1\}^n$ and define $\varphi : \cz \to \ca$ such that $\varphi(\z) = (f_\z,\cd)$, where $f_\z$ is the parity such that $f_\z(\x) = \prod_{i \in [n], z_i = -1} x_i$. We can write the induced function as $F(\x,\z) = \prod_{i \in [n], z_i=-1} x_i = \prod_{i \in [n]} (x_i \vee z_i)$, and the induced distribution $\cd'$ is simply the uniform distribution over $\cx \times \cx$. Using Theorem \ref{thm:shallow_approx_lowerbound} and the fact that $\Var(\ca) = 2^{-n}$ we get the following:

\begin{corollary}
Let $F(\x,\z) = \prod_{i \in [n]}(x_i \vee z_i)$, and let $\cd'$ be the uniform distribution over $\{\pm 1\}^n \times \{\pm 1\}^n$.
Let $\ell$ be a 1-Lipschitz convex loss satisfying $\ell(0,y) = \ell_0$ and $\ell'(0,y) = -y$. Then, any polynomial-size network with polynomial weights and $1$-Lipschitz activation, cannot (weakly) approximate $F$ with respect to $\cd'$ and the loss $\ell$.
\end{corollary}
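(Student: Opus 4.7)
The plan is to invoke Theorem \ref{thm:shallow_approx_lowerbound} directly with the orthonormal family $\ca$ of parities over $\cx = \{\pm 1\}^n$ under the uniform distribution, and verify that the induced function and distribution match those appearing in the statement of the corollary.

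First I would check that the setup in the paragraph just above the corollary indeed fits the framework of Theorem \ref{thm:shallow_approx_lowerbound}. The family is $\ca = \cf \times \{\cd\}$, with $\cf$ the set of all $2^n$ parities and $\cd$ uniform; then $\abs{\cf} = 2^n$, and the generic computation shown in Section \ref{sec:approx_orth} gives $\Var(\ca) \le \frac{1}{\abs{\cf}} = 2^{-n}$. Next, I would verify the bijection: $\varphi(\z) = (f_\z,\cd)$ with $f_\z(\x) = \prod_{i: z_i = -1} x_i$ is a bijection from $\{\pm 1\}^n$ onto $\ca$, so sampling $\z$ uniformly and then $\x \sim \cd$ produces the uniform distribution $\cd'$ on $\{\pm 1\}^n \times \{\pm 1\}^n$. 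The induced target is $F(\x,\z) = f_\z(\x) = \prod_{i: z_i=-1} x_i$, and in the $\pm 1$ encoding (where $-1$ represents the logical ``$1$'' triggering the product), this equals $\prod_i (x_i \vee z_i)$, matching the statement.

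Now I would apply Theorem \ref{thm:shallow_approx_lowerbound}. For any depth-two network $g$ on $\cx \times \cz$ of width $k$ with $1$-Lipschitz activation and weight bound $R$, the theorem yields
\[
L_{F(\cd')}(g) \;\ge\; \ell_0 - 6\sqrt{k}\,R^{2}\,n^{5/6}\,\Var(\ca)^{1/3} \;\ge\; \ell_0 - 6\sqrt{k}\,R^{2}\,n^{5/6}\,2^{-n/3}.
\]
Under the assumption that $k$ and $R$ are polynomial in $n$, the second term is $\ell_0 \cdot 2^{-\Omega(n)}$ and in particular $o(1)$. Hence for every such network $g$,
\[
L_{F(\cd')}(g) \;\ge\; \ell_0 - o(1),
\]
which is (up to a vanishing additive term) the loss of the constant-zero predictor. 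This precisely says that no polynomial-size, polynomial-weight depth-two network with $1$-Lipschitz activation can weakly approximate $F$ under $\cd'$ with respect to $\ell$.

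I do not anticipate a serious obstacle here: the corollary is essentially a plug-in of the two ingredients (the variance bound $\Var(\ca) \le 2^{-n}$ for the parity family, and Theorem \ref{thm:shallow_approx_lowerbound}). The only minor care required is the identification of the induced function $F$ with the stated $\prod_i(x_i \vee z_i)$ under the $\pm 1$ encoding, and noting that ``polynomial size / polynomial weights'' lets the prefactor $6\sqrt{k}R^2 n^{5/6}$ be absorbed by the exponentially small $2^{-n/3}$.
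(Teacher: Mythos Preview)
Your proposal is correct and follows essentially the same approach as the paper: the paper proves the corollary in the preceding paragraph by specializing Theorem~\ref{thm:shallow_approx_lowerbound} to the parity family, using the bijection $\varphi(\z)=(f_\z,\cd)$ to identify the induced function with $F(\x,\z)=\prod_i(x_i\vee z_i)$ and the induced distribution with the uniform one, and plugging in $\Var(\ca)=2^{-n}$. The only (harmless) slip is your phrase ``the second term is $\ell_0\cdot 2^{-\Omega(n)}$''; the subtracted term does not carry a factor of $\ell_0$, but your conclusion $L_{F(\cd')}(g)\ge \ell_0 - o(1)$ is unaffected.
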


We note that $F$ is similar to the inner-product mod-2 function, that has been shown to be hard to implement efficiently using depth-two networks \citep{martens2013representational}. Our result shows that this function is hard to even approximate, using a polynomial-size depth-two network, under any convex loss satisfying our assumptions. Notice that $F$ \textbf{can} be implemented using a depth-three network, and so this result gives a strong separation between the classes of depth-two and depth-three networks (of polynomial size).

\subsection{Hardness of Learning with Correlation Queries}
So far, we showed hardness of approximation results for linear classes and shallow (depth-two) networks. This motivates the use of algorithms that learn more complex hypothesis classes (for example, depth-three networks). We now give hardness results that are independent of the hypothesis class, but rather focus on the learning algorithm. We show restrictions on specific classes of algorithms, for learning families $\ca$ with small $\Var(\ca)$. While such results are well-known in the case of orthogonal classes, we introduce them here in a fashion that allows us to generalize such results to more general families of distributions.

First, we consider learnability using statistical-query algorithms. A statistical-query algorithm has access to an oracle $\STAT(f,\cd)$ which, given a query $\psi : \cx \times \{\pm 1\} \to [-1,1]$, returns a response $v$ such that $\abs{\E_{\x \sim \cd} \psi(\x,f(\x))-v} \le \tau$, for some tolerance parameter $\tau > 0$. Specifically, we focus on algorithms that use only correlation queries, i.e. queries of the form $\psi(\x,y) = y\phi(\x)$ for some $\phi : \cx \to [-1,1]$.

We begin with the following key lemma:

\begin{lemma}
\label{lem:corr_count}
Fix some family $\ca$. Let $\phi : \cx \to [-1,1]$ be some function, and let $\ca_\phi \subseteq \ca$ be the subset of pairs $(f,\cd)$ such that $\abs{\inner{f,\phi}_\cd} = \abs{\E_{(\x,y) \sim f(\cd)} \left[y \phi(\x)\right]} \ge \tau$. Then:
\[
\abs{\ca_\phi} \le \frac{\Var(\ca)}{\tau^2}\abs{\ca}
\]
\end{lemma}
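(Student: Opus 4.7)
The plan is to observe that this statement is essentially a one-line application of Markov's inequality to the nonnegative random variable $\inner{f,\phi}_\cd^2$, where $(f,\cd)$ is drawn uniformly from $\ca$. Everything is set up so that the expectation of this random variable is exactly the quantity called $\Var(\ca,\phi)$, and the event we want to count corresponds to $\inner{f,\phi}_\cd^2 \ge \tau^2$.

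Concretely, I would proceed as follows. First, let $(f,\cd)$ be sampled uniformly from $\ca$, so that $|\ca_\phi|/|\ca|$ equals $\Pr\bigl[\abs{\inner{f,\phi}_\cd} \ge \tau\bigr]$. Second, rewrite this probability as $\Pr\bigl[\inner{f,\phi}_\cd^2 \ge \tau^2\bigr]$ and apply Markov's inequality, using that $\inner{f,\phi}_\cd^2 \ge 0$, to get
\[
\Pr\bigl[\inner{f,\phi}_\cd^2 \ge \tau^2\bigr] \le \frac{\E_{(f,\cd)\sim\ca}\bigl[\inner{f,\phi}_\cd^2\bigr]}{\tau^2} = \frac{\Var(\ca,\phi)}{\tau^2}.
\]
Third, because $\phi$ takes values in $[-1,1]$, i.e.\ $\norm{\phi}_\infty \le 1$, the definition of $\Var(\ca)$ as a supremum gives $\Var(\ca,\phi) \le \Var(\ca)$. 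Multiplying the resulting probability bound by $\abs{\ca}$ yields the claimed inequality on $\abs{\ca_\phi}$.

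There is no real obstacle here: the only thing to be careful about is lining up the definitions, namely that $\inner{f,\phi}_\cd = \E_{(\x,y)\sim f(\cd)}[y\phi(\x)]$ (which is exactly the form of a correlation query), that the distribution over $\ca$ in the definition of $\Var(\ca,\phi)$ is uniform so that expectations translate directly into normalized counts, and that $\phi : \cx \to [-1,1]$ satisfies the constraint in the supremum defining $\Var(\ca)$. Once these identifications are in place, Markov's inequality immediately delivers the bound.
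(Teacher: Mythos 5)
Your proposal is correct and is essentially the paper's own argument: the paper lower-bounds $\E_{(f,\cd)\sim\ca}\inner{f,\phi}_\cd^2$ by the contribution of the pairs in $\ca_\phi$ and upper-bounds it by $\Var(\ca)$, which is exactly Markov's inequality applied to $\inner{f,\phi}_\cd^2$ under the uniform draw from $\ca$, just written out explicitly. Your identifications of $\inner{f,\phi}_\cd$ with the correlation query and of $\Var(\ca,\phi)\le\Var(\ca)$ via $\norm{\phi}_\infty\le 1$ match the paper's use of the definitions.
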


\begin{proof}
By definition of $\Var(\ca)$ we have that:
\[
\E_{(f,\cd) \sim \ca} \left(\E_{(\x,y) \sim f(\cd)} \left[y \phi(\x)\right]\right)^2 = \E_{(f,\cd) \sim \ca} \left[\inner{f,\phi}^2_\cd\right] \le \Var(\ca)
\]
By definition of $\ca_\phi$ we get:
\[
\E_{(f,\cd) \sim \ca} \left(\E_{(\x,y) \sim f(\cd)} \left[y \phi(\x)\right]\right)^2 \ge \frac{1}{\abs{\ca}} \sum_{(f,\cd) \in \ca_\phi} \left(\E_{(\x,y) \sim f(\cd)} \left[y \phi(\x)\right]\right)^2 \ge \frac{\abs{\ca_\phi}}{\abs{\ca}} \tau^2
\]
And so $\abs{\ca_\phi} \le \frac{\Var(\ca)}{\tau^2}\abs{\ca}$.
\end{proof}

Using this lemma, we can show the following hardness result:
\begin{theorem}
\label{thm:sq_hardness}
Let $\ell \in \{\ell^{\mathrm{hinge}}, \ell^{\mathrm{sq}}, \ell^{\mathrm{0-1}}\}$. Fix some family $\ca$. Then, for any $\tau > 0$, any statistical-query algorithm that makes only correlation queries needs to make at least $\frac{\tau^2}{\Var(\ca)}-1$ queries of tolerance $\tau$ to achieve loss $< a_\ell-b_\ell \tau$, for some universal constants $a_\ell, b_\ell>0$ that depend on the loss function.
\end{theorem}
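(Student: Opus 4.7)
The plan is to combine an adversary argument with Lemma~\ref{lem:corr_count}, reusing the variance bound both for the algorithm's $q$ correlation queries and for a single additional ``query'' obtained by transforming its output hypothesis into a $[-1,1]$-valued function.

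First I would set up the zero-response adversary: answer every correlation query $\phi_i:\cx\to[-1,1]$ with $v_i=0$. This response lies within tolerance $\tau$ for a pair $(f,\cd)$ exactly when $|\inner{f,\phi_i}_\cd|\le\tau$, i.e.\ when $(f,\cd)\notin\ca_{\phi_i}$ in the notation of Lemma~\ref{lem:corr_count}. Fixing the algorithm's internal randomness and running it against this oracle determines the (adaptively chosen) query sequence $\phi_1,\dots,\phi_q$ and output hypothesis $h$. The key trick is to append a ``$(q+1)$-st query'' derived from $h$: set $\phi_{q+1}=\sign(h)$ for the hinge and zero-one losses, and $\phi_{q+1}(\x)=\clamp(h(\x),-1,1)$ for the square loss; in either case $\phi_{q+1}:\cx\to[-1,1]$, so Lemma~\ref{lem:corr_count} applies to it as well.

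Next I would apply Lemma~\ref{lem:corr_count} to each of $\phi_1,\dots,\phi_{q+1}$ and take a union bound to obtain $\bigl|\bigcup_{i=1}^{q+1}\ca_{\phi_i}\bigr|\le (q+1)\Var(\ca)|\ca|/\tau^2$. Under the hypothesis $q<\tau^2/\Var(\ca)-1$ the right-hand side is strictly less than $|\ca|$, so there exists a pair $(f^*,\cd^*)\in\ca$ outside the union. For this pair all zero responses are within tolerance, so the same trajectory (and the same $h$) is a legal execution of the algorithm against the oracle $(f^*,\cd^*)$; simultaneously $|\inner{f^*,\phi_{q+1}}_{\cd^*}|\le\tau$.

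Finally I would convert the correlation bound into a loss lower bound, case by case. For $\ell^{\mathrm{0-1}}$ with $\phi_{q+1}=\sign(h)$, $L_{f^*(\cd^*)}(h)=(1-\inner{f^*,\phi_{q+1}}_{\cd^*})/2\ge(1-\tau)/2$. For $\ell^{\mathrm{hinge}}$ I use the same $\phi_{q+1}$ together with the pointwise bound $\ell^{\mathrm{hinge}}(h(\x),y)\ge 1$ whenever $\sign(h(\x))\ne y$, giving $L_{f^*(\cd^*)}(h)\ge\Pr[\sign(h)\ne f^*]\ge(1-\tau)/2$; both cases yield $(a_\ell,b_\ell)=(1/2,1/2)$. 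For $\ell^{\mathrm{sq}}$, since clamping to $[-1,1]$ never increases squared error against $y\in\{\pm1\}$, $L_{f^*(\cd^*)}(h)\ge\E[(y-\phi_{q+1}(\x))^2]=1-2\inner{f^*,\phi_{q+1}}_{\cd^*}+\E[\phi_{q+1}(\x)^2]\ge 1-2\tau$, yielding $(a_\ell,b_\ell)=(1,2)$. The main obstacle is exactly this last step: choosing a single $[-1,1]$-valued transform of $h$ that simultaneously fits Lemma~\ref{lem:corr_count} and translates the correlation bound into a $\Theta(1)-\Theta(\tau)$ loss lower bound while not decreasing the true loss. Handling randomized algorithms is a minor technicality, addressed either by fixing the random seed and repeating the argument or by invoking Yao's minimax principle.
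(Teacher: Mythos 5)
Your proof is correct and follows essentially the same route as the paper's: a zero-answering adversary, Lemma~\ref{lem:corr_count} applied both to the correlation queries and to a $[-1,1]$-valued truncation of the output hypothesis, and a per-loss conversion of the resulting small correlation into a loss lower bound of the form $a_\ell-b_\ell\tau$. The only cosmetic differences are that you fold the output hypothesis into the union bound as a $(q+1)$-st query (the paper counts consistent pairs and high-correlation pairs separately, to the same effect), and that for the hinge loss you use $\sign(h)$ rather than $\clamp(h)$, which yields the weaker but still admissible constants $(a_\ell,b_\ell)=(1/2,1/2)$ in place of the paper's $(1,1)$.
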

\begin{proof}
Fix some family $\ca$, and some correlation query $\psi(\x,y) = y\phi(\x)$. Define $\ca_\phi$ as in Lemma \ref{lem:corr_count}. Note that for any $(f,\cd) \notin \ca_\phi$, the oracle $\STAT(f,\cd)$ can return $0$ on the query $\psi$. Assume a statistical query algorithm uses $q$ queries, and then returns some hypothesis $h : \cx \to \reals$. Then, the number of pairs $(f,\cd)$ that are not consistent with an oracle that outputs $0$ on all queries is at most $q \cdot \sup_{\phi}\abs{\ca_\phi} \le \frac{q \Var(\ca)}{\tau^2} \abs{\ca}$. So, if $q < \frac{\tau^2}{\Var(\ca)}-1$, we get that there are at least $\frac{\Var(\ca)}{\tau^2} \abs{\ca}$ pairs $(f,\cd)$ that are consistent with all the responses of the oracle. Let $\tilde{h} : \cx \to [-1,1]$ such that if $\ell$ is the zero-one loss, then $\tilde{h}(\x) = \sign(h(\x))$ and otherwise $\tilde{h}(\x) = \clamp(h(\x))$, where  $\clamp(\hat{y}) = \begin{cases} \hat{y} & \hat{y} \in [-1,1] \\ \sign(\hat{y}) & o.w \end{cases}$.
Either way, from Lemma \ref{lem:corr_count}, there are at most $\frac{\Var(\ca)}{\tau^2} \abs{\ca}$ pairs $(f,\cd)$ for which
$\abs{\inner{f,\tilde{h}}_\cd} \ge \tau$. All in all, there is some $(f,\cd) \in \ca$ that is consistent with all the responses of the oracle, and for which $\abs{\inner{f,\tilde{h}}_\cd} \le \tau$.
In this case, we get that, for some $a_\ell,b_\ell$:
\[
L_{f(\cd)}(h) = \Mean{(\x,y) \sim f(\cd)}{\ell(h(\x), y)}
\ge \Mean{(\x,y) \sim f(\cd)}{\ell(\tilde{h}(\x), y)}
\ge a_\ell- b_\ell\inner{f,\tilde{h}}_\cd \ge a_\ell-b_\ell\tau
\]
The first inequality holds for all choices of $\ell$, by definition of $\tilde{h}$. We show the second inequality separately for the different loss functions:
\begin{itemize}
\item If $\ell = \ell^{\mathrm{hinge}}$ then since $\tilde{h}(\x) \in [-1,1]$ we have $\ell(\tilde{h}(\x),y) = 1-y\tilde{h}(\x)$.
\item If $\ell = \ell^{\mathrm{sq}}$ then we have: $\ell(\tilde{h}(\x),y) = \tilde{h}(\x)^2 - 2\tilde{h}(\x)y + y^2 \ge 1- 2 \tilde{h}(\x) y$.
\item If $\ell = \ell^{\mathrm{0-1}}$ then we have: $\ell(\tilde{h}(\x),y) = \ind\{\tilde{h}(\x) = y\} = \frac{1}{2} - \frac{1}{2}\tilde{h}(\x)y$.
\end{itemize}
\end{proof}

\begin{remark}
Observe that in the case where $\ca$ is a distribution-specific family, any statistical-query algorithm can be modified to use only correlation queries, as shown in a work by \cite{bshouty2002using}. However, this is not true for general families of distributions.
\end{remark}

When $\ca$ is a family of parities with respect to the uniform distribution, Theorem \ref{thm:sq_hardness} along with the fact that $\Var(\ca) = 2^{-n}$ recovers the well-known result on hardness of learning parities using statistical-queries \citep{kearns1998efficient}.

\subsection{Hardness of Learning with Gradient-Descent}

We showed that families $\ca$ with small $\Var(\ca)$ are hard to learn using correlation queries. We now turn to analyze a specific popular algorithm: the gradient-descent algorithm. In this part we take the loss function to be the hinge-loss, so $\ell = \ell^{\mathrm{hinge}}$. 

Observe the following formulation of gradient-descent: let $\ch$ be a parametric hypothesis class, such that $\ch = \{h_\bw ~:~ \bw \in \reals^{N} \}$. In the gradient-descent algorithm, we initialize $\bw_0$ (possibly randomly), and perform the following updates:
\begin{equation}
\bw_t = \bw_{t-1} - \eta \nabla_{\bw} L_{f(\cd)}(h_{\bw_{t-1}})
\tag{GD Update}
\end{equation}

However, assuming we have access to the exact value of $\nabla_\bw L_{f(\cd)}$ is often unrealistic. For example, when running gradient-descent on a machine with bounded precision, we can expect the value of the gradient to be accurate only up to some fixed precision. So, we consider instead a variant of the gradient-descent algorithm which has access to gradients that are accurate up to a fixed precision $\Delta > 0$. The \textit{$\Delta$-approximate gradient-descent} algorithm performs the following updates:
\begin{equation}
\label{eq:approx_gd}
\bw_t = \bw_{t-1} - \eta \bv_{t-1}
\tag{Approx. GD Update}
\end{equation}
with $\bv_t \in \Delta \integers^N$ (where $\Delta \integers := \{\Delta z ~:~ z \in \integers\}$) satisfying $\norm{\bv_t - \nabla_\bw L_{f(\cd)}(h_{\bw_{t-1}})}_\infty \le \frac{\Delta}{2}$ (i.e., $\bv_t$ is the result of rounding $\nabla_\bw L_{f(\cd)}(h_{\bw_{t-1}})$ to $\Delta \integers^N$).

Notice that if the gradients are smaller than the machine precision, they will be rounded to zero, and so the gradient-descent algorithm will be ``stuck''. We show that if $\Var(\ca)$ is small, then for most choices of $(f,\cd) \in \ca$ the initial gradient will indeed be extremely small. The key for showing this is the following lemma:

\begin{lemma}
\label{lem:small_gradient}
Fix some $\bw \in \reals^N$ satisfying $\abs{h_{\bw}(\x)} \le 1$ and $\norm{\nabla_\bw h_{\bw}(\x)} \le B$, for every $\x \in \cx$. Then:
\[
\E_{(f,\cd) \sim \ca}\norm{\nabla_{\bw} L_{f(\cd)}(h_{\bw})}^2_2 \le B^2 N \Var(\ca)
\]
\end{lemma}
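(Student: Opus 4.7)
The plan is to observe that the boundedness $|h_{\bw}(\x)| \le 1$ puts us in the linear regime of the hinge loss, so the gradient with respect to $\bw$ collapses to a simple correlation expression between $f$ and the coordinate functions $\x \mapsto \partial_{w_j} h_{\bw}(\x)$. Once the gradient is written in this form, each coordinate is (up to a factor $B$) exactly an inner product $\inner{f,\phi_j}_{\cd}$ of the kind controlled by $\Var(\ca)$, and summing $N$ such coordinate contributions yields the claimed bound.

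Concretely, first I would compute the chain-rule derivative. For $\ell = \ell^{\mathrm{hinge}}$ and $y \in \{\pm 1\}$, we have $\partial_{\hat y} \ell(\hat y,y) = -y$ whenever $y\hat y < 1$ (and a valid subgradient at the kink). Since $|h_{\bw}(\x)| \le 1$, we have $f(\x) h_{\bw}(\x) \le 1$ for every $\x$, so
\[
\nabla_{\bw} L_{f(\cd)}(h_{\bw}) \;=\; -\,\E_{\x \sim \cd}\bigl[f(\x)\,\nabla_{\bw} h_{\bw}(\x)\bigr].
\]
Writing this coordinate-wise, for each $j \in [N]$,
\[
\bigl(\nabla_{\bw} L_{f(\cd)}(h_{\bw})\bigr)_j \;=\; -\,\inner{f,\,\partial_{w_j} h_{\bw}}_{\cd}.
\]

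Next I would normalize each coordinate so that the second argument of the inner product is in $[-1,1]$. Since $\|\nabla_{\bw} h_{\bw}(\x)\| \le B$, the coordinate function $\phi_j(\x) := \partial_{w_j} h_{\bw}(\x)/B$ satisfies $\|\phi_j\|_\infty \le 1$. Therefore, by definition of $\Var(\ca,\phi_j)$ and the supremum defining $\Var(\ca)$,
\[
\E_{(f,\cd) \sim \ca}\bigl(\bigl(\nabla_{\bw} L_{f(\cd)}(h_{\bw})\bigr)_j\bigr)^2
\;=\; B^2\,\E_{(f,\cd)\sim \ca}\inner{f,\phi_j}_{\cd}^2
\;=\; B^2\,\Var(\ca,\phi_j)\;\le\; B^2\,\Var(\ca).
\]
Summing over the $N$ coordinates yields
\[
\E_{(f,\cd) \sim \ca}\norm{\nabla_{\bw} L_{f(\cd)}(h_{\bw})}_2^2
\;\le\; N B^2\,\Var(\ca),
\]
which is the desired bound.

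The only subtle step is the first one: justifying that one can choose a (sub)gradient of the hinge loss equal to $-y$ uniformly when $|h_{\bw}(\x)| \le 1$. The strict inequality $y h_{\bw}(\x) < 1$ gives it immediately, and at the kink $y h_{\bw}(\x) = 1$ the value $-y$ is a valid subgradient, so in either reading of ``$\nabla$'' the above coordinate identity holds. Everything else is a one-line application of the definition of $\Var(\ca)$ together with a per-coordinate rescaling, so no further estimate is needed beyond the hypotheses $|h_{\bw}| \le 1$ and $\|\nabla_{\bw} h_{\bw}\| \le B$ supplied in the statement.
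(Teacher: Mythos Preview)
Your proposal is correct and follows essentially the same approach as the paper: both use that $|h_{\bw}(\x)|\le 1$ puts the hinge loss in its linear regime so that $\nabla_{\bw} L_{f(\cd)}(h_{\bw}) = -\E_{\x\sim\cd}[f(\x)\nabla_{\bw} h_{\bw}(\x)]$, then normalize each coordinate by $B$ to get functions $\phi_j$ with $\|\phi_j\|_\infty\le 1$ and bound each $\E_{(f,\cd)}\inner{f,\phi_j}_\cd^2$ by $\Var(\ca)$. The paper phrases the first step as ``$\ell(\hat y,y)=1-\hat y y$ on $[-1,1]$'' rather than via the subgradient at the kink, but this is the same computation.
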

\begin{proof}
Denote $\phi_i(\x) = \frac{1}{B}\frac{\partial}{\partial w_i} h_{\bw}(\x)$ and note that $\phi_i(\x) \in [-1,1]$. Note that since $h_{\bw}(\x) \in [-1,1]$ for every $\x$, we have, for every $(f,\cd) \in \ca$: 
\[
L_{f(\cd)}(h_{\bw}) = \E_{\x \sim \cd} \ell(h_{\bw}(\x),f(\x))
= 1 - \E_{\x \sim \cd} h_{\bw}(\x)f(\x)
\]
where we use the fact that the hinge-loss satisfies $\ell(\hat{y},y) = 1-\hat{y} y$ for every $\hat{y} \in [-1,1]$.
Therefore, we get:
\begin{align*}
\E_{(f,\cd) \sim \ca} \norm{\nabla_{\bw} L_{f(\cd)}(h_{\bw})}^2_2
&= \E_{(f,\cd) \sim \ca} \sum_{i=1}^N\left( \frac{\partial}{\partial w_i} L_{f(\cd)}(h_{\bw}) \right)^2 \\
&= \sum_{i=1}^N \E_{(f,\cd) \sim \ca} \left(\E_{\x \sim \cd} f(\x)\frac{\partial}{\partial w_i} h_{\bw}(\x)\right)^2 \\
&= \sum_{i=1}^N \E_{(f,\cd) \sim \ca} B^2 \inner{f,\phi_i}^2_\cd
\le N B^2 \Var(\ca)
\end{align*}
\end{proof}

Using the above, we can show that running gradient-descent with $\Delta$-approximate gradients has high loss on average, unless $\Delta$ is very small:
\begin{theorem}
\label{thm:hardness_gd}
Assume we initialize $\bw_0$ from some distribution $\cw$ such that almost surely, for every $\x \in \cx$ we have $\abs{h_{\bw_0}(\x)} \le 1$ and $\norm{\nabla_{\bw} h_{\bw_0}(\x)} \le B$ for some $B > 0$.
Then, if $\Delta \ge 4\sqrt{2B^2N \Var(\ca)}$, there exists some $(f, \cd) \in \ca$ such that $\Delta$-approximate gradient-descent returns a hypothesis $h_{\bw_T}$ which satisfies:
\[
\E_{\bw_0 \sim \cw} L_{f(\cd)}(h_{\bw_T}) \ge \frac{3}{4} \left( 1- \sqrt{8\Var(\ca)} \right)
\]
\end{theorem}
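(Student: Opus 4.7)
The plan is to show that, when $\Delta$ is at least the stated threshold, for most $(f,\cd) \in \ca$ and most initializations $\bw_0 \sim \cw$ the first approximate gradient $\bv_0$ is already $\mathbf{0}$, so the algorithm never leaves $\bw_0$; simultaneously, for most such $(\bw_0,(f,\cd))$ the correlation $\inner{f,h_{\bw_0}}_\cd$ is small, and combined with the hinge-loss identity this forces the loss at $\bw_0$ to be close to $1$. A union bound and an averaging argument then produce a single $(f,\cd) \in \ca$ witnessing the claim.

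First I would apply Lemma~\ref{lem:small_gradient} pointwise in $\bw_0$ (using the almost-sure bounds $|h_{\bw_0}(\x)| \le 1$ and $\|\nabla_\bw h_{\bw_0}(\x)\| \le B$ from the hypothesis) and then average over $\bw_0 \sim \cw$, which gives
$$\E_{\bw_0,(f,\cd)} \norm{\nabla_\bw L_{f(\cd)}(h_{\bw_0})}_2^2 \le B^2 N \Var(\ca).$$
Since $\|\cdot\|_\infty \le \|\cdot\|_2$, Markov's inequality together with $\Delta \ge 4\sqrt{2 B^2 N \Var(\ca)}$ yields
$$\Pr\left[\norm{\nabla_\bw L_{f(\cd)}(h_{\bw_0})}_\infty \ge \Delta/2\right] \le \frac{4 B^2 N \Var(\ca)}{\Delta^2} \le \tfrac{1}{8}.$$
On the complement (call it event $A$), the rounding in the $\Delta$-approximate update forces $\bv_0 = \mathbf{0}$, hence $\bw_1 = \bw_0$; iterating, $\bw_T = \bw_0$ on $A$.

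Next I would apply the definition of $\Var(\ca)$ directly to $\phi = h_{\bw_0}$, which is legal because $\|h_{\bw_0}\|_\infty \le 1$, giving $\E_{(f,\cd)} \inner{f, h_{\bw_0}}_\cd^2 \le \Var(\ca)$ for every $\bw_0$ and hence the same bound in joint expectation. A second Markov step gives
$$\Pr\left[|\inner{f,h_{\bw_0}}_\cd| \ge \sqrt{8\Var(\ca)}\right] \le \tfrac{1}{8};$$
call the complement $B$. On $A \cap B$ the hinge-loss identity $\ell^{\mathrm{hinge}}(\hat{y},y) = 1-\hat{y}y$ for $|\hat{y}|\le 1$ gives
$$L_{f(\cd)}(h_{\bw_T}) = 1 - \inner{f,h_{\bw_0}}_\cd \ge 1 - \sqrt{8\Var(\ca)}.$$
A union bound yields $\Pr[A\cap B] \ge 3/4$, and non-negativity of the hinge loss then gives $\E_{\bw_0,(f,\cd)} L_{f(\cd)}(h_{\bw_T}) \ge \frac{3}{4}(1 - \sqrt{8\Var(\ca)})$; averaging over $(f,\cd) \in \ca$ extracts a single pair meeting the bound.

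The only mildly delicate point is the ``freezing'' step: one must verify that once $\bv_0$ rounds to $\mathbf{0}$, the state is unchanged and therefore every subsequent exact gradient is the same vector that was just rounded to $\mathbf{0}$, so the trajectory is constant for all $t$. Everything else is routine bookkeeping on top of Lemma~\ref{lem:small_gradient}, the definition of $\Var(\ca)$, and two Markov inequalities calibrated (via the $7/8 + 7/8$ split) to give exactly the constants $3/4$ and $\sqrt{8\Var(\ca)}$ that appear in the statement.
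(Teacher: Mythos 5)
Your proof is correct and follows essentially the same route as the paper: Lemma~\ref{lem:small_gradient} plus Markov to show the first (and hence every) rounded gradient is zero, the variance bound (your direct Markov step is exactly Lemma~\ref{lem:corr_count} with $\tau=\sqrt{8\Var(\ca)}$) plus the hinge-loss identity to lower-bound the loss at $\bw_0$, then a union bound and an exchange of expectations to extract a single $(f,\cd)$. The only differences are cosmetic: you work with the joint distribution over $(\bw_0,(f,\cd))$ and the $\ell_\infty$ norm, while the paper fixes $\bw_0$ first and uses the $\ell_2$ norm.
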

%
\begin{proof}
Fix some $\bw_0$ satisfying $\abs{h_{\bw_0}(\x)} \le 1$ and $\norm{\nabla_{\bw} h_{\bw_0}(\x)} \le B$ for every $\x \in \cx$.
So, from Lemma \ref{lem:small_gradient} we have:
\begin{align*}
\E_{(f,\cd) \sim \ca} \norm{\nabla_{\bw} L_{f(\cd)}(h_{\bw_0})}^2_2
\le N B^2 \Var(\ca)
\end{align*}
From Markov's inequality, we get that with probability at least $1-\frac{4B^2 N \Var(\ca)}{\Delta^2}$ over the choice of $(f,\cd) \sim \ca$ we have $\norm{\nabla_\bw L_{f(\cd)}(h_{\bw_0})}_2 < \frac{\Delta}{2}$, and in this case $\bv_t = 0$. So, for every $(f,\cd) \in \ca$ with $\norm{\nabla_\bw L_{f(\cd)}(h_{\bw_0})}_2 < \frac{\Delta}{2}$, gradient-descent will return $h_{\bw_0}$. Since $h_{\bw_0}(\x) \in [-1,1]$, from Lemma \ref{lem:corr_count}, with probability at least $7/8$ over the choice of $(f,\cd) \sim \ca$ we have:
\[
L_{f(\cd)}(h_{\bw_0}) = 1-\inner{f,h_{\widetilde{\bw}_0}}_\cd \ge 1- \sqrt{8 \Var(\ca)}
\]
Using the union bound, with probability at least $\frac{7}{8}-\frac{4B^2N \Var(\ca)}{\Delta^2} \ge \frac{3}{4}$ over the choice of $(f,\cd)\sim \ca$, gradient-descent algorithm will return a hypothesis with loss at least $1-\sqrt{8\Var(\ca)}$, and so:
\[
\E_{(f,\cd)\sim \ca} L_{f(\cd)} (h_{\bw_T}) \ge \frac{3}{4} \left( 1- \sqrt{8\Var(\ca)} \right)
\]
Applying the above for a random choice of $\bw_0 \sim \cw$ we get:
\begin{align*}
\E_{(f,\cd)\sim \ca} \E_{\bw_0 \sim \cw}  L_{f(\cd)} (h_{\bw_T})= \E_{\bw_0 \sim \cw} \E_{(f,\cd)\sim \ca} L_{f(\cd)} (h_{\bw_T}) \ge \frac{3}{4} \left( 1- \sqrt{8\Var(\ca)} \right)
\end{align*}
And so the required follows
\end{proof}

So far, we analyzed the gradient-descent algorithm with respect to an approximation of the population gradient. The above result shows that if the initial gradient is very small, then gradient-descent is ``stuck'' on the first iteration. In practice, however, gradient-descent uses stochastic estimation of the population gradient. In this case, the stochastic noise due to the gradient estimation will cause non-zero update steps, even if the population gradient is zero. To account for this setting, we consider a noisy version of gradient-descent. The \textit{$\sigma$-noisy gradient-descent} performs the same update as in (\ref{eq:approx_gd}), with $\bv_t \in \Delta \integers^N$ which satisfies: 
\[
\norm{\bv_t - \left(\nabla_\bw L_{f(\cd)}(h_{\bw_{t-1}}) + \xi_t\right)}_\infty \le \frac{\Delta}{2}
\]
where $\xi_1, \dots, \xi_T$ are i.i.d. random noise variables with $\xi_t \in \Delta \integers^N$ and $\norm{\xi_t}_2 \le \sigma$. For the \textit{noisy gradient-descent} algorithm, we get the following hardness result:

\begin{theorem}
\label{thm:hardness_noisy_gd}
Assume we initialize $\bw_0$ from some distribution $\cw$ such that almost surely, for every $\x \in \cx$ we have $\abs{h_{\bw_0}(\x)} \le \frac{1}{2}$. Assume that every $h_\bw \in \ch$ is differentiable and satisfies $\norm{\nabla_\bw h_\bw(\x)} \le B$ for all $\x \in \cx$.
Then, there exists some $(f, \cd) \in \ca$ such that running the noisy gradient-descent algorithm for at most $T \le \frac{\Delta^2}{32B^2 N \Var(\ca)}$ steps with $\eta \le \frac{1}{2\sigma B T}$, returns a hypothesis $h_{\bw_T}$ satisfying:
\[
\E_{\bw_0,\xi_1,\dots, \xi_T} L_{f(\cd)}(h_{\bw_T}) \ge \frac{3}{4} \left(1- \sqrt{8 \Var(\ca)}\right)
\]
\end{theorem}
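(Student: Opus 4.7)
The plan is to extend the single-step argument of Theorem \ref{thm:hardness_gd} to $T$ noisy steps by exhibiting a data-independent \emph{noise-only} trajectory $\tilde{\bw}_t$ and showing that, with high probability, the actual run of noisy gradient-descent coincides with it. Concretely, set $\tilde{\bw}_0 := \bw_0$ and $\tilde{\bw}_t := \tilde{\bw}_{t-1}-\eta\xi_t$, which is a deterministic function of $(\bw_0,\xi_1,\dots,\xi_T)$ and in particular independent of $(f,\cd)$. Since each $\xi_t$ already lies in $\Delta\integers^N$, whenever $\|\nabla_\bw L_{f(\cd)}(h_{\tilde{\bw}_{t-1}})\|_\infty<\Delta/2$ the rounding of $\nabla_\bw L_{f(\cd)}(h_{\tilde{\bw}_{t-1}})+\xi_t$ to $\Delta\integers^N$ equals $\xi_t$. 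By induction, as long as every gradient along $\tilde{\bw}$ stays strictly below $\Delta/2$ in $\ell_\infty$-norm, the actual trajectory satisfies $\bw_t=\tilde{\bw}_t$ for all $t\le T$.

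Next, I control the trajectory uniformly. Using $\|\xi_s\|_2\le\sigma$ and the step-size hypothesis $\eta\le \frac{1}{2\sigma BT}$, one has $\|\tilde{\bw}_t-\bw_0\|_2\le \eta T\sigma\le\frac{1}{2B}$, and combined with $\|\nabla_\bw h_\bw(\x)\|\le B$ and $|h_{\bw_0}(\x)|\le 1/2$ this yields $|h_{\tilde{\bw}_t}(\x)|\le 1$ for every $\x$ and every $t$. For any fixed realization of $(\bw_0,\xi_1,\dots,\xi_t)$ Lemma \ref{lem:small_gradient} therefore applies to $\tilde{\bw}_t$, and after taking expectation first over $(f,\cd)\sim\ca$ and then over the noise I obtain $\E\|\nabla_\bw L_{f(\cd)}(h_{\tilde{\bw}_t})\|_2^2\le B^2N\Var(\ca)$ for every $t$, so
\[
\E\sum_{t=0}^{T-1}\|\nabla_\bw L_{f(\cd)}(h_{\tilde{\bw}_t})\|_2^2 \le TB^2N\Var(\ca) \le \frac{\Delta^2}{32}.
\]

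Markov's inequality then shows that with probability at least $7/8$ the above sum is at most $\Delta^2/4$, hence each summand is at most $\Delta^2/4$ and, since $\|\cdot\|_\infty\le\|\cdot\|_2$, every gradient along $\tilde{\bw}$ stays below $\Delta/2$; the induction above then forces $\bw_T=\tilde{\bw}_T$. Conditioned on $(\bw_0,\xi_1,\dots,\xi_T)$, the function $h_{\tilde{\bw}_T}$ is a fixed $[-1,1]$-valued hypothesis, so Lemma \ref{lem:corr_count} applied with $\tau=\sqrt{8\Var(\ca)}$ shows that at most $1/8$ of the pairs $(f,\cd)$ can satisfy $|\inner{f,h_{\tilde{\bw}_T}}_\cd|>\sqrt{8\Var(\ca)}$. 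A union bound on the two $1/8$-bad events leaves joint probability at least $3/4$ on which both $\bw_T=\tilde{\bw}_T$ and $L_{f(\cd)}(h_{\bw_T})=1-\inner{f,h_{\tilde{\bw}_T}}_\cd\ge 1-\sqrt{8\Var(\ca)}$, exactly as in the final computation of Theorem \ref{thm:hardness_gd}. Exchanging the order of expectations and passing to the worst-case $(f,\cd)\in\ca$ delivers the stated bound.

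The main technical obstacle is precisely the decoupling above: the rounding threshold must be checked at the noise-only $\tilde{\bw}_t$ rather than the actual $\bw_t$, since only the former is independent of $(f,\cd)$ and therefore amenable to Lemma \ref{lem:small_gradient}; the induction is consistent because any divergence between the two trajectories is itself witnessed by a large gradient along $\tilde{\bw}$, which Markov rules out. A secondary but essential ingredient is the uniform bound $|h_{\tilde{\bw}_t}(\x)|\le 1$ throughout the run, which is why the theorem tightens the initialization to $|h_{\bw_0}(\x)|\le 1/2$ and imposes $\eta\le 1/(2\sigma BT)$.
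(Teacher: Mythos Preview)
Your proposal is correct and essentially identical to the paper's proof: both define the noise-only trajectory $\tilde{\bw}_t=\bw_0-\eta\sum_{s\le t}\xi_s$, use the initialization bound $|h_{\bw_0}|\le 1/2$ together with $\eta\le 1/(2\sigma BT)$ to keep $|h_{\tilde{\bw}_t}|\le 1$, invoke Lemma~\ref{lem:small_gradient} at each $\tilde{\bw}_t$, apply Markov to force all gradients below $\Delta/2$ (so that $\bw_t=\tilde{\bw}_t$ by induction), then use Lemma~\ref{lem:corr_count} on $h_{\tilde{\bw}_T}$ and a union bound before swapping expectations. The only cosmetic difference is that you apply Markov over the joint law of $(f,\cd)$ and the noise, whereas the paper first conditions on $(\bw_0,\xi_1,\dots,\xi_T)$ and applies Markov over $(f,\cd)$ alone; both routes yield the same $3/4$ bound after averaging.
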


\begin{proof}
Fix some $\xi_1, \dots, \xi_T \in \Delta \integers^N$ such that $\norm{\xi_t}_2 \le \sigma$ for every $t \in [T]$. Fix some $\bw_0 \in \Delta \integers^N$ with $\norm{h_{\bw_0}}_{\infty} \le \frac{1}{2}$.
We define $\widetilde{\bw}_t := \bw_0 + \eta\sum_{i=1}^t \xi_i$, and observe that for every $t$, since $\eta \le \frac{1}{2\sigma BT }$, we have: 
\[
\abs{h_{\widetilde{\bw}_t}(\x)} \le \abs{h_{\bw_0}(\x)} + \abs{h_{\widetilde{\bw}_t}(\x) - h_{\bw_0}(\x)} \le \frac{1}{2} + \eta B \norm{\sum_{i=1}^t \xi_i}_2 \le 1
\]
Therefore, using Lemma \ref{lem:small_gradient} we get:
\[
\E_{(f,\cd) \sim \ca} \frac{1}{T} \sum_{t=0}^{T-1} \norm{\nabla_{\bw} L_{f(\cd)}(h_{\widetilde{\bw}_t})}^2_2 = \E_{t \sim [T-1]} \E_{(f,\cd) \sim \ca}\norm{\nabla_{\bw} L_{f(\cd)}(h_{\widetilde{\bw}_t})}^2_2 \le B^2 N \Var(\ca)
\]
From Markov's inequality, with probability at least $7/8$ over the choice of $(f,\cd) \sim \ca$, we have $\sum_{t=0}^{T-1} \norm{L_{f(\cd)}(h_{\widetilde{\bw}_t})}^2_2 < 8 T B^2 N \Var(\ca) \le \frac{\Delta^2}{4}$. For every such $(f,\cd)$, we have $\norm{L_{f(\cd)}(h_{\widetilde{\bw}_t})}_2 < \frac{\Delta}{2}$ for every $t \in [T-1]$, and so $\bv_t = \xi_t$ for every $t$, in which case we have the updates $\bw_t = \widetilde{\bw}_t$, and the noisy gradient-descent algorithm will output $h_{\widetilde{\bw}_T}$.
Since $h_{\widetilde{\bw}_T}(\x) \in [-1,1]$, from Lemma \ref{lem:corr_count}, with probability at least $7/8$ over the choice of $(f,\cd) \sim \ca$ we have:
\[
L_{f(\cd)}(h_{\widetilde{\bw}_T}) = 1-\inner{f,h_{\widetilde{\bw}_T}}_\cd \ge 1- \sqrt{8 \Var(\ca)}
\]

All in all, using the union bound, w.p. at least $3/4$ over the choice of $(f,\cd) \sim \ca$ the gradient-decent algorithm returns a hypothesis $h_{\bw_T}$ with loss $\ge 1-\sqrt{8\Var(\ca)}$, and so:
\[
\E_{(f,\cd) \sim \ca} L_{f(\cd)}(h_{\bw_T}) \ge \frac{3}{4}\left(1-\sqrt{8\Var(\ca)}\right)
\]
Now, for a random choice of $\bw_0,\xi_1, \dots, \xi_T$ we have:
\[
\E_{(f,\cd) \sim \ca} \E_{\bw_0,\xi_1, \dots, \xi_T} L_{f(\cd)}(h_{\bw_T}) = \E_{\bw_0,\xi_1, \dots, \xi_T} \E_{(f,\cd) \sim \ca} L_{f(\cd)}(h_{\bw_T}) \ge \frac{3}{4}\left(1-\sqrt{8\Var(\ca)}\right)
\]
and therefore the required follows.
\end{proof}

Applying the previous theorem for the family of uniform parities, we get that gradient-descent fails to reach non-trivial loss on the class of parities, unless the approximation tolerance is exponentially small or the number of steps is exponentially large. This result is similar to the results of \cite{shalev2017failures} and \cite{abbe2018provable}.

\section{General Distribution Families}

In the previous section, we showed various hardness of learning and approximation results, all derived from the measure $\Var(\ca)$. We showed the application of such hardness results to the case of parities, or more generally --- families of orthogonal functions. However, note that the measure $\Var(\ca)$ can be applied to any family of distributions, and therefore all of our results can be derived for the very general setting of learning arbitrary families of labeled distributions. In this section, we interpret these results for general distribution families, and show how to derive bounds on $\Var(\ca)$ in the general case. Using this, we show novel results on hardness of approximation and learnability of DNFs and $\textrm{AC}^0$ circuits.

We start by showing a general method for bounding $\Var(\ca)$.
Let $M(\ca)$ be the linear operator from $\reals^{\cx}$ to $\reals^{\ca}$, such that for every $\phi : \cx \to \reals$, $M(\ca)(\phi)_{(f,\cd)} = \inner{f,\phi}_\cd$. The linearity of $M(\ca)$ follows from the bi-linearity of the inner product $\inner{\cdot, \cdot}_\cd$. Note that when $\cx$ and $\ca$ are finite (as we assume in this work), $M(\ca)$ can be written in a matrix form, where $M(\ca) \in \reals^{\ca \times \cx}$ and $M(\ca)_{(f,\cd),\x} = f(\x) \cd(\x)$. In this case, we identify $\phi : \cx \to \reals$ with a vector $\bv(\phi) \in \reals^{\cx}$ with $\bv(\phi)_\x = \phi(\x)$, and we get:
\[
M(\ca) \bv(\phi) = [\sum_{\x \in \cx} f(\x) \phi(\x) \cd(\x)]_{(f,\cd)} = [\inner{f,\phi}_\cd]_{(f,\cd)}
\]
Now, observe that for every $\phi : \cx \to [-1,1]$ we have:
\begin{equation}
\label{eq:op_norm_bound}
\Var(\ca, \phi) = \Mean{(f,\cd) \sim \ca}{\inner{f,\phi}^2_\cd}
= \frac{1}{\abs{\ca}} \norm{M(\ca) \phi}^2_2
\le \frac{1}{\abs{\ca}} \norm{M(\ca)}_2^2 \norm{\phi}_2^2
\le \frac{\abs{\cx}}{\abs{\ca}} \norm{M(\ca)}^2_2
\end{equation}
Where $\norm{M(\ca)}_2$ is the $L_2$ operator norm of $M(\ca)$. Hence, Eq. (\ref{eq:op_norm_bound}) gives a general bound for $\Var(\ca)$, in terms of the operator norm of the matrix $M(\ca)$.

\subsection{Operator Norm of the AND-OR-AND Function}

In this part, we give a concrete family of distributions, generated by an AND-OR-AND type function, and analyze its variance. Specifically, we show that its variance decays like $2^{-O(n^{1/3})}$. The key for showing this result is bounding the operator norm of the relevant matrix, along with the analysis introduced in \cite{razborov2010sign}, which bounds the norm of a similar matrix. The main result is the following:

\begin{theorem}
\label{thm:and_or_variance}
For large enough $m$, there exist subspaces $\cx,\cz \subseteq \{\pm 1\}^{dm^3}$, for some universal constant $d > 0$, and a family $\ca$ over $\cx$ such that:
\begin{itemize}
\item For each $(f,\cd) \in \ca$, the function $f$ is of the form $f(\x) = \bigwedge_{i=1}^m \bigvee_{j=1}^{dm^2} (x_{ij} \wedge z_{ij})$, for some $\z \in \cz$.
\item $\Var(\ca) \le 17^{-2m}$.
\end{itemize}
\end{theorem}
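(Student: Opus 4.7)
The plan is to bound the variance through the operator-norm inequality in Eq.~(\ref{eq:op_norm_bound}), then invoke the spectral-norm analysis of AND-OR pattern matrices from \cite{razborov2010sign}, adapted to the three-layer structure. First, I would parametrize the family by $\cz$: set $\ca = \{(f_\z, \cd_\z) : \z \in \cz\}$, where $f_\z(\x) = \bigwedge_{i=1}^m \bigvee_{j=1}^{dm^2} (x_{ij} \wedge z_{ij})$, and take $\cd_\z$ to be the uniform distribution on some carefully chosen block-structured $\cx \subseteq \{\pm 1\}^{dm^3}$ (for instance, a symmetric slice on each of the $m$ blocks of $dm^2$ coordinates, so that the induced matrix aligns with a Razborov-Sherstov pattern matrix). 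With $\cd_\z$ uniform, Eq.~(\ref{eq:op_norm_bound}) reduces the claim to the spectral estimate
\[
\|F\|_2^2 \le 17^{-2m} \cdot |\ca| \cdot |\cx|,
\]
where $F_{\z,\x} := f_\z(\x) \in \{\pm 1\}$ is the sign matrix of the AND-OR-AND function.

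The heart of the argument is bounding $\|F\|_2$. Because the outer AND runs over $m$ disjoint blocks, the matrix $F$ decomposes as a Hadamard/tensor product across blocks: $F = F^{(1)} \otimes \cdots \otimes F^{(m)}$, where each $F^{(i)}$ is the $|\cz_i| \times |\cx_i|$ sign matrix of a single OR-of-ANDs on a block of size $dm^2$. The spectral norm is multiplicative under tensor products, so it suffices to estimate each block. For a single block, a Fourier/Parseval-style calculation of the type developed in \cite{razborov2010sign} shows that, because the OR on $dm^2$ Boolean variables takes the value $-1$ on all but an exponentially small fraction of inputs, its Fourier spectrum concentrates on the trivial character up to a tail that is exponentially small in $d$. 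Subtracting the rank-one trivial-character component and applying Parseval gives a per-block spectral-norm bound of the form $\|F^{(i)}\|_2 \le \frac{1}{17}\sqrt{|\cz_i||\cx_i|}$, provided the universal constant $d$ is chosen large enough.

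Multiplying across the $m$ blocks yields $\|F\|_2 \le 17^{-m}\sqrt{|\ca||\cx|}$, which plugged into Eq.~(\ref{eq:op_norm_bound}) gives $\Var(\ca) \le 17^{-2m}$ as desired. The constant $17$ is arbitrary; any fixed base $>1$ can be obtained by increasing $d$ accordingly.

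The main obstacle is the single-block spectral estimate and its clean multiplicative composition. One must choose $\cx$ and $\cd_\z$ precisely so that $F$ genuinely factorizes across blocks (which forces $\cx$ to be a product of per-block slices and $\cd_\z$ to be a product measure), and one must set up the per-block matrix so that it fits the pattern-matrix framework of \cite{razborov2010sign}, where the orthogonalization of the block function (here, essentially an OR with a $\z$-dependent ``mask'') is what yields the exponential-in-$dm^2$ decay that, after tuning $d$, collapses to the per-block factor $1/17$. Tracking the constants through this reduction, and ruling out lower-order contributions from the trivial Fourier character, is where the bulk of the technical effort lies.
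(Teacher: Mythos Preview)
There is a genuine gap in your plan, and it lies precisely at the two steps you flag as ``the heart of the argument.''

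\textbf{The tensor factorization is false.} You claim that because the outer $\bigwedge_{i=1}^m$ runs over disjoint blocks, the sign matrix $F_{\z,\x}=f_\z(\x)\in\{\pm1\}$ factorizes as $F^{(1)}\otimes\cdots\otimes F^{(m)}$. But in the $\{\pm1\}$ encoding, AND is \emph{not} multiplication (only XOR is). Thus $f_\z(\x)$ is not the product of its block values, and the sign matrix does not tensor-factorize. Consequently the multiplicativity of the spectral norm under $\otimes$ is unavailable, and the per-block estimate cannot be chained to a $17^{-m}$ bound.

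\textbf{Uniform $\cd_\z$ does not give the spectral bound.} Even for a single block, the OR of $dm^2$ conjunctions is, as you note, equal to $-1$ on all but an exponentially small fraction of inputs; hence its sign matrix is $\approx -J$ and has spectral norm $\approx\sqrt{|\cz_i||\cx_i|}$, not $\tfrac{1}{17}\sqrt{|\cz_i||\cx_i|}$. ``Subtracting the rank-one trivial component'' does not help: after subtraction you no longer have the sign matrix of $f_\z$, and in any case the subtracted matrices do not tensor back to $F$. More globally, the Razborov--Sherstov bound you want is \emph{not} a bound on the spectral norm of the sign matrix under the uniform distribution; it is a bound on $\|M\circ P\|$, where $P$ is the pattern matrix of a carefully chosen \emph{non-uniform} distribution $\mu$ (a dual witness to the $\Omega(m)$ approximate degree of $MP_m$). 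This is exactly Theorem~\ref{thm:pattern_mat} in the paper.

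The paper's proof therefore proceeds differently: it takes the distributions $\cd_{V,\bw}$ to be built from $\mu$ (so $M(\ca')=2^{n-N}\,M\circ P$), applies Theorem~\ref{thm:pattern_mat} directly to bound $\|M(\ca')\|$, and then plugs into Eq.~(\ref{eq:op_norm_bound}). The product structure you are reaching for does exist, but it lives in the dual object $\mu$ (which in Razborov--Sherstov is constructed as a product of per-block dual polynomials), not in the sign matrix $F$. To fix your argument you must abandon the uniform $\cd_\z$, let the distributions come from $\mu$, and invoke the pattern-matrix spectral bound as a black box rather than attempting a direct tensor decomposition of $F$.
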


For the proof of the theorem, we need the following simple result:

\begin{definition}
Let $\ca$ be some distribution family over an input space $\cx$, and let $\ca'$ be some distribution family over another input space $\cx'$. $\ca$ and $\ca'$ are \textbf{isomorphic} if there exists a bijection $\Psi : \cx \to \cx'$ such that $\ca = \{(f \circ \Psi, \cd \circ \Psi) ~:~ (f,\cd) \in \ca'_n\}$.
\end{definition}

\begin{lemma}
\label{lem:isomorphism}
If $\ca$ and $\ca'$ are isomorphic distribution families, then $\Var(\ca) = \Var(\ca')$.
\end{lemma}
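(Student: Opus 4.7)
The plan is to show that the isomorphism $\Psi$ induces a norm-preserving bijection between candidate test functions $\phi$ and $\phi'$ that lines up the inner products $\inner{f,\phi}_\cd$ one-for-one with $\inner{g,\phi'}_{\cd'}$ along the bijection of pairs. Given this, both the per-$\phi$ variance and the supremum over $\phi$ must match.

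Concretely, I would first unpack the definition: an element of $\ca$ has the form $(f,\cd) = (g\circ\Psi,\, \cd'\circ\Psi)$ for a unique $(g,\cd')\in\ca'$, where $(\cd'\circ\Psi)(x) = \cd'(\Psi(x))$ is a distribution on $\cx$ since $\Psi$ is a bijection. Given any test function $\phi : \cx\to[-1,1]$, define $\phi' := \phi\circ\Psi^{-1} : \cx'\to[-1,1]$. Note that $\|\phi'\|_\infty = \|\phi\|_\infty \le 1$, and the correspondence $\phi\leftrightarrow\phi'$ is itself a bijection between the two classes of candidate functions (via $\phi \mapsto \phi\circ\Psi^{-1}$ with inverse $\phi'\mapsto\phi'\circ\Psi$).

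Next I would do the change-of-variables computation: for $(f,\cd) = (g\circ\Psi, \cd'\circ\Psi)$,
\begin{align*}
\inner{f,\phi}_\cd
&= \sum_{x\in\cx} \cd'(\Psi(x))\, g(\Psi(x))\, \phi(x) \\
&= \sum_{x'\in\cx'} \cd'(x')\, g(x')\, \phi(\Psi^{-1}(x'))
= \inner{g,\phi'}_{\cd'},
\end{align*}
using bijectivity of $\Psi$ to reindex. Squaring and averaging over the paired bijection between $\ca$ and $\ca'$ (both taken with uniform measure, which is preserved by any bijection of finite sets) gives $\Var(\ca,\phi) = \Var(\ca',\phi')$.

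Finally, taking the supremum,
\[
\Var(\ca) = \sup_{\|\phi\|_\infty\le 1}\Var(\ca,\phi) = \sup_{\|\phi'\|_\infty\le 1}\Var(\ca',\phi') = \Var(\ca'),
\]
where the middle equality uses that $\phi\mapsto\phi\circ\Psi^{-1}$ is a sup-norm-preserving bijection on the feasible sets. There is no real obstacle here; the only thing to be careful about is keeping the directions of $\Psi$ and $\Psi^{-1}$ straight in the pullback of distributions versus functions, so that the change-of-variables cancels cleanly and produces the same scalar on both sides.
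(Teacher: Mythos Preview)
Your proof is correct and takes essentially the same approach as the paper: fix a test function $\phi$, push it through $\Psi^{-1}$, and use the change of variables to match $\Var(\ca,\phi)$ with $\Var(\ca',\phi\circ\Psi^{-1})$. The only cosmetic difference is that the paper concludes $\Var(\ca)\le\Var(\ca')$ and then appeals to symmetry, whereas you directly argue that $\phi\mapsto\phi\circ\Psi^{-1}$ is a sup-norm-preserving bijection on feasible test functions so the suprema coincide; both are equivalent.
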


\begin{proof}
Fix $\phi : \cx \to [-1,1]$, and observe that:
\begin{align*}
\Var(\ca,\phi) &= \E_{(f,\cd) \sim \ca}{\inner{f,\phi}_\cd^2} 
= \E_{(f,\cd) \sim \ca'_n}{\inner{f \circ \Psi,\phi}_{\cd \circ \Psi}^2} 
= \E_{(f,\cd) \sim \ca'_n}{\left( \Mean{\x \sim \cd \circ \Psi}{f(\Psi (\x)) \phi(\x)} \right)^2} \\
&= \E_{(f,\cd) \sim \ca'_n}{\left( \Mean{\x \sim \cd}{f(\x) \phi(\Psi^{-1}(\x))} \right)^2} = \Var(\ca',\phi \circ \Psi^{-1})
\le \Var(\ca')
\end{align*}
Therefore $\Var(\ca) \le \Var(\ca')$, and the required follows from symmetry.
\end{proof}

Now, the key for bounding the operator norm related to the family $\ca$, is using the pattern-matrix technique, as used in \cite{razborov2010sign}. 
\paragraph{Pattern Matrix}

Let $n , N$ be two integers, and let $\cv(N,n)$ be the family of subsets $V \subset [N]$ of size $\abs{V} = n$, with one element from each block of size $N/n$ from $[N]$. Define the projection onto $V$ by $x|_V = (x_{i_1}, \dots, x_{i_n})$ where $i_1 < \dots < i_n$ are the elements of $V$.

\begin{definition}
For $\phi : \{\pm 1\}^n \to \reals$, the $(N,n,\phi)$-pattern matrix is the matrix:
\[
A = [\phi(\x|_V \oplus \bw)]_{\x \in \{\pm 1\}^N, (V,\bw) \in \cv(N,n) \times \{\pm 1\}^n}
\]
\end{definition}

\begin{theorem} (\cite{razborov2010sign})
\label{thm:pattern_mat}
Let $n = 4m^3$ and $N=17^6n$.
Let $MP_m(\x) = \bigwedge_{i=1}^m \bigvee_{j=1}^{4m^2} x_{ij}$, and let $M$ be the $(N,n,MP_m)$-pattern matrix. There exists a distribution $\mu : \{\pm 1\}^n \to \reals_+$, such that the $(N,n,\mu)$-pattern matrix $P$ satisfies $\norm{M \circ P} \le 17^{-m} 2^{-n} \sqrt{2^{N+n} (\frac{N}{n})^n}$ (where $\circ$ denotes the Hadamard product).
\end{theorem}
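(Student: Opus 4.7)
}
The plan is to parameterize $\ca$ by the pairs $(V,w) \in \cv(N,n) \times \{\pm 1\}^n$ that index the columns of Sherstov's pattern matrix, and reduce the variance bound to the operator-norm bound in Theorem \ref{thm:pattern_mat}. I set $n = 4m^3$, $N = 17^6 n$, and $d = 8 \cdot 17^6$, so that each of the $m$ super-blocks of $[dm^3] = [2N]$ has $dm^2 = 8 \cdot 17^6 m^2$ coordinates; I identify each super-block with $[4m^2] \times [17^6] \times \{+,-\}$, so that $[dm^3]$ is indexed by tuples $(i,j,k,\epsilon)$.

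The first step is a \emph{doubling trick} that lets the positive-literal conjunction $x \wedge z$ encode bit-flips. I take $\cx \subseteq \{\pm 1\}^{2N}$ to be the slice $\{\x : x_{(i,j,k,-)} = -x_{(i,j,k,+)}\}$, in bijection with $\{\pm 1\}^N$ via $\tilde\x \leftrightarrow \x(\tilde\x)$, where $x_{(i,j,k,+)}(\tilde\x) = \tilde x_{ijk}$. For each $(V,w)$, letting $k_{ij}$ denote the representative $V$ picks in block $(i,j)$, I define $\z(V,w) \in \{\pm 1\}^{2N}$ by $z_{(i,j,k_{ij},+)} = -1$ if $w_{ij} = +1$, $z_{(i,j,k_{ij},-)} = -1$ if $w_{ij} = -1$, and all other coordinates equal to $+1$; take $\cz$ to be the image of this map. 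Viewing $-1$ as ``true,'' a case-check shows that for every $\x = \x(\tilde\x) \in \cx$,
\[
\bigwedge_{i=1}^m \bigvee_{j=1}^{dm^2} \bigl(x_{ij} \wedge z_{ij}\bigr) \;=\; MP_m(\tilde\x|_V \oplus w),
\]
since in each super-block $i$ the only $-1$'s of $\z$ sit at the coordinates $(i,j,k_{ij},\mathrm{sign}(w_{ij}))$, and the inner AND at such a coordinate is true exactly when $\tilde x_{ijk_{ij}} = -w_{ij}$, i.e.\ when $(\tilde\x|_V \oplus w)_{ij} = -1$.

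Next, let $\mu$ be the distribution from Theorem \ref{thm:pattern_mat}, and define $\cd_{V,w}$ on $\cx$ by $\cd_{V,w}(\x(\tilde\x)) := \mu(\tilde\x|_V \oplus w)/2^{N-n}$; this is normalized because $\tilde\x \mapsto \tilde\x|_V \oplus w$ is $2^{N-n}$-to-$1$ onto $\{\pm 1\}^n$. Set $\ca = \{(f_{\z(V,w)}, \cd_{V,w})\}$ with the uniform distribution on $(V,w)$. For any $\phi : \{\pm 1\}^{2N} \to [-1,1]$, write $\tilde\phi(\tilde\x) := \phi(\x(\tilde\x))$. Using the identity above,
\[
\inner{f_{\z(V,w)},\phi}_{\cd_{V,w}}
\;=\; 2^{-(N-n)} \sum_{\tilde\x} MP_m(\tilde\x|_V\oplus w)\,\mu(\tilde\x|_V\oplus w)\,\tilde\phi(\tilde\x)
\;=\; 2^{-(N-n)} \bigl((M\circ P)^\top \tilde\phi\bigr)_{(V,w)},
\]
so averaging squares over the $(N/n)^n \cdot 2^n$ pairs and applying $\|(M\circ P)^\top \tilde\phi\|_2 \le \|M\circ P\|\,\|\tilde\phi\|_2$ together with $\|\tilde\phi\|_2^2 \le 2^N$ yields
\[
\Var(\ca,\phi) \;\le\; \frac{\|M\circ P\|^2 \cdot 2^N}{(N/n)^n \cdot 2^n \cdot 2^{2(N-n)}}.
\]
Substituting $\|M\circ P\|^2 \le 17^{-2m} \cdot 2^{N-n} (N/n)^n$ from Theorem \ref{thm:pattern_mat}, every factor except $17^{-2m}$ cancels, and taking the supremum over $\phi$ gives $\Var(\ca) \le 17^{-2m}$.

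The main obstacle is delivering the exact AND-OR-AND form. Sherstov's pattern matrix produces XOR-shifts $\tilde\x|_V \oplus w$, but a conjunction $x\wedge z$ on $\pm 1$ inputs is monotone and cannot simulate a sign flip by itself, so one must absorb the $w_{ij}$'s somewhere. The doubling trick resolves this --- a single $\z(V,w)$ simultaneously encodes both the choice of $k_{ij}$ (via the position of its $-1$) and the sign $w_{ij}$ (via the $+$ or $-$ copy of $\tilde x_{ijk_{ij}}$) --- at the cost of restricting $\cx$ to a $\{\pm 1\}^N$-slice and absorbing a factor of two into $d$. Once this is in place, the variance calculation becomes a direct bookkeeping reduction to Theorem \ref{thm:pattern_mat}, as outlined above.
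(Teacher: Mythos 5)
There is a fundamental mismatch here: the statement you were asked to prove is Theorem~\ref{thm:pattern_mat}, the Razborov--Sherstov bound $\norm{M \circ P} \le 17^{-m}2^{-n}\sqrt{2^{N+n}(N/n)^n}$ for the pattern matrix of the Minsky--Papert function $MP_m$, but your write-up is explicitly a proof plan for Theorem~\ref{thm:and_or_variance}, and it \emph{invokes} Theorem~\ref{thm:pattern_mat} as a black box (``Substituting $\norm{M\circ P}^2 \le 17^{-2m}\cdot 2^{N-n}(N/n)^n$ from Theorem~\ref{thm:pattern_mat}\dots''). As an argument for the statement in question it is therefore circular: the one thing that needed proving --- the existence of the distribution $\mu$ and the spectral-norm bound on $M\circ P$ --- is assumed, not established. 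None of the actual content of that theorem appears in your proposal: there is no use of the pattern-matrix method's Fourier-analytic formula expressing the spectral norm of an $(N,n,\phi)$-pattern matrix in terms of the Fourier coefficients of $\phi$, no construction of the ``smooth orthogonalizing'' distribution $\mu$ for $MP_m$ (obtained by dualizing the $\Omega(m)$ threshold-degree lower bound of Minsky and Papert), and no argument combining the two to kill the low-order Fourier mass of $MP_m \cdot \mu$ and obtain the $17^{-m}$ factor. Those are the essential ingredients of Razborov and Sherstov's proof, and without them the bound simply does not get proved.

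For what it is worth, the material you did write --- the doubling/slice encoding that turns the XOR-shift $\tilde{\x}|_V \oplus \bw$ into a monotone AND of $x$- and $z$-literals, the definition of $\cd_{V,\bw}$ via $\mu/2^{N-n}$, and the bookkeeping that cancels all factors except $17^{-2m}$ --- is a reasonable reconstruction of how the paper itself derives Theorem~\ref{thm:and_or_variance} from Theorem~\ref{thm:pattern_mat} (the paper routes the norm computation through $M(\ca')$ and Eq.~(\ref{eq:op_norm_bound}) and then transports the bound along an isomorphism via Lemma~\ref{lem:isomorphism}, but the substance is the same). That, however, is a different theorem; for the statement actually posed, the proof is missing.
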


\begin{proof} of Theorem \ref{thm:and_or_variance}.

Let $\mu : \{\pm 1\}^n \to \reals$ be the distribution from Thm \ref{thm:pattern_mat}. Denote $\cx' = \{\pm 1\}^N$, $\cz' = \cv(N,n) \times \{\pm 1\}^n$. Fix some $(V,\bw) \in \cz'$ denote $f_{V,\bw}(\x) = MP_m(\x|_V \oplus \bw)$, and  define the distribution $\cd_{V,w}$ over $\cx'$ such that $\cd_{V,w}(\x) = \frac{1}{2^{N-n}} \mu(\x|_V \oplus \bw)$. $\cd_{V,\bw}$ indeed defines a distribution over $\cx'$, since:
\[
\sum_{\x \in \cx'} \cd_{V,\bw}(\x) = \sum_{\z \in \{\pm 1\}^n} \sum_{\x \in \cx', \x |_V = \z} \frac{1}{2^{N-n}} \mu (\z \oplus \bw) = \sum_{\z \in \{\pm 1\}^n} \mu(\z \oplus \bw) = 1
\]
 Define the family $\ca' = \{(f_{V,\bw}, \cd_{V,\bw}) ~:~ (V,\bw) \in \cz'\}$ and recall that we defined $M(\ca') = [f_{V,\bw}(\x) \cd_{V,\bw}(\x)]_{\x \in \cx', (V,\bw) \in \cz}$. Let $M$ be the $(N,n,MP_m)$-pattern matrix and let $P$ bet the $(N,n,\mu)$-pattern matrix, and so $M(\ca') = \frac{1}{2^{N-n}} M \odot P$, and from Thm \ref{thm:pattern_mat} we have:
\[
\norm{M(\ca')} = 2^{n-N} \norm{M \odot P} \le 2^{n-N} 17^{-m} 2^{-n} \sqrt{2^{N+n} (\frac{N}{n})^n} =  17^{-m} \sqrt{2^{n-N} (\frac{N}{n})^n}
\]
And from Eq. \ref{eq:op_norm_bound} we get:
\[
\Var(\ca') \le \frac{\abs{\cx'}}{\abs{\ca'}} \norm{M(\ca')}^2
= \frac{2^N}{(N/n)^n2^n} 17^{-2m} 2^{n-N}(N/n)^n = 17^{-2m}
\]

Now, we identify $\cx'$ and $\cz'$ with subsets of $\{\pm 1\}^{n'}$, for $n' = m \cdot 4m^2 \cdot N/n \cdot 2 = 8m^3 \frac{N}{n}$. Denote $\Psi : \cx' \to \{\pm 1\}^{n'}$ such that $\Psi(\x)_{ijk\epsilon} = \ind\{x_{ij} = \epsilon\}$, and denote $\Phi : \cz' \to \{\pm 1\}^{n'}$ such that $\Phi(V,\bw)_{ijk\epsilon} = \ind\{w_{ij} \ne \epsilon\} \vee \ind \{V_{ij} = k\}$, where $V_{ij} \in [N/n]$ indicates which elements from the $ij$ block was selected by $V$. Now, note that:
\begin{align*}
f_{V,\bw}(\x) &= \bigwedge_{i=1}^m \bigvee_{j=1}^{m^2} (\x|_V \oplus \bw)_{i,j} = \bigwedge_{i=1}^m \bigvee_{j=1}^{4m^2} \bigvee_{k=1}^{N/n} \bigvee_{\epsilon \in \{\pm 1\}} ((x_{ijk} = \epsilon) \wedge (w_{ij} \ne \epsilon) \wedge (v_{ij} = \epsilon)) \\
&= \bigwedge_{i=1}^m \bigvee_{j=1}^{4m^2} \bigvee_{k=1}^{N/n} \bigvee_{\epsilon \in \{\pm 1\}} (\Psi(\x)_{ijk\epsilon} \wedge \Phi(V,\bw)_{ijk\epsilon})
\end{align*}
Finally, we define $\cx = \Psi(\cx')$ and $\cz = \Phi(\cz')$, and define $\ca = \{(\cd_{\Psi^{-1}(\z)}, f_{\Psi^{-1}(\z)}) ~:~ \z \in \cz\}$. Observe that $\ca$ and $\ca'$ are isomorphic, and therefore from Lemma \ref{lem:isomorphism} we get $\Var(\ca) = \Var(\ca') \le 17^{-2m}$.
\end{proof}

In the rest of this section, we show how Theorem \ref{thm:and_or_variance} can be used to derive hardness results on approximation and learnability of DNFs and $\mathrm{AC}^0$.

\subsection{Hardness of Approximating DNFs using Linear Classes}

Observe the family $\ca$ as defined in Theorem \ref{thm:and_or_variance}. Note that for every $(f,\cd) \in \ca$, the function $\neg f$ is in fact a DNF. Using the fact that $\Var(\ca) \le 17^{-2m}$, along with Eq. (\ref{eq:lin_approx_lowerbound}), we get that for every mapping $\Psi : \cx \to [-1,1]^N$ we have:
\[
\max_{(f,\cd) \in \ca} \min_{h \in \ch_\Psi} L_{f(\cd)}(h) \ge \ell_0-B\sqrt{N} 17^{-m}
\]
Therefore, the following result is immediate:
\begin{corollary}
Let $\ell$ be some convex loss function, satisfying $\ell(0,y) = \ell_0$ and $\ell'(0,y) = -y$.
For every mapping $\Psi : \{\pm 1\}^n \to [-1,1]^N$, there exists some DNF $f$ and a distribution $\cd$ over $\{\pm 1\}^n$ such that for every $\bw \in \reals^N$, the function $h_\bw(\x) = \inner{\bw,\Psi(\x)}$ has loss lower-bounded by:
\[
L_\cd(h_\bw) \ge \ell_0 - \frac{\norm{\bw} \sqrt{N}}{2^{\Omega(n^{1/3})}}
\]
\end{corollary}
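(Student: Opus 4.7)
The plan is to apply Theorem \ref{thm:and_or_variance} together with the linear-class convexity bound that produced Eq.~(\ref{eq:lin_approx_lowerbound}). Theorem \ref{thm:and_or_variance} gives, for every large enough $m$, a family $\ca$ over $\{\pm 1\}^{dm^3}$ with $\Var(\ca)\le 17^{-2m}$ in which every function is of the AND-OR-AND form $\bigwedge_i\bigvee_j(x_{ij}\wedge z_{ij})$. Writing $n$ for the number of input bits (so $n=\Theta(m^3)$), one obtains $17^{-m}=2^{-\Omega(n^{1/3})}$, which is exactly the rate promised in the corollary.

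Next I would form the companion family $\ca':=\{(\neg f,\cd):(f,\cd)\in\ca\}$. Since $\inner{\neg f,\phi}_\cd^2=\inner{f,\phi}_\cd^2$ for every $\phi$, we have $\Var(\ca')=\Var(\ca)\le 17^{-2m}$. Each $\neg f$ is a DNF (by De Morgan applied to the AND-OR-AND expression, as the paper already notes), so $\ca'$ is a family of DNFs over $\{\pm 1\}^n$. I then replay the convexity argument behind Eq.~(\ref{eq:lin_approx_lowerbound}) at a single pair: for any embedding $\Psi:\cx\to[-1,1]^N$ and any $(g,\cd)\in\ca'$, convexity of $\bw\mapsto L_{g(\cd)}(h_\bw)$ together with Cauchy-Schwarz give
\[
L_{g(\cd)}(h_\bw)\ge L_{g(\cd)}(\Zero)+\inner{\nabla L_{g(\cd)}(\Zero),\bw}\ge \ell_0-\norm{\bw}\,\norm{\nabla L_{g(\cd)}(\Zero)}
\]
for \emph{every} $\bw\in\reals^N$ simultaneously, where I use $L_{g(\cd)}(\Zero)=\ell_0$.

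All that remains is to exhibit one $(g,\cd)\in\ca'$ whose gradient norm at the origin is small. The same component-wise expansion used just after Eq.~(\ref{eq:lin_approx_lowerbound}) gives $\E_{(g,\cd)\sim\ca'}\norm{\nabla L_{g(\cd)}(\Zero)}^2\le N\Var(\ca')\le N\cdot 17^{-2m}$, so by averaging there is some $(g,\cd)\in\ca'$ with $\norm{\nabla L_{g(\cd)}(\Zero)}\le\sqrt{N}\cdot 17^{-m}=\sqrt{N}/2^{\Omega(n^{1/3})}$. Plugging this back into the displayed convexity inequality for this fixed DNF $g$ and distribution $\cd$ yields, for every $\bw\in\reals^N$, the claimed bound $L_\cd(h_\bw)\ge\ell_0-\norm{\bw}\sqrt{N}/2^{\Omega(n^{1/3})}$.

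I do not anticipate a genuine obstacle: Theorem \ref{thm:and_or_variance} is doing the heavy lifting and what is left is bookkeeping. The two points that require a little care are (i) decoupling the universal quantifier on $\bw$ (handled by convexity at \emph{one} fixed pair $(g,\cd)$) from the existential quantifier on $(g,\cd)$ (handled by the averaging step on $\norm{\nabla L_{\cdot}(\Zero)}^2$), as opposed to restricting to an $\ell_2$-ball of a pre-fixed radius as the earlier statement did; and (ii) verifying that replacing $f$ by $\neg f=-f$ preserves $\Var$ and, because $\ell'(0,y)=-y$, only flips the sign of $\nabla L_{\cdot}(\Zero)$, leaving its norm unchanged and $L_{\cdot}(\Zero)$ still equal to $\ell_0$.
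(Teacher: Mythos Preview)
Your proposal is correct and follows essentially the same route as the paper: invoke Theorem~\ref{thm:and_or_variance}, observe that each $\neg f$ is a DNF, and combine the variance bound $\Var\le 17^{-2m}$ with the convexity argument underlying Eq.~(\ref{eq:lin_approx_lowerbound}). You are in fact more careful than the paper on one point: by first selecting a single pair $(g,\cd)$ with small $\norm{\nabla L_{g(\cd)}(\Zero)}$ and only then applying the convexity inequality, you get the bound for \emph{every} $\bw$ at that fixed pair, whereas the paper simply cites Eq.~(\ref{eq:lin_approx_lowerbound}) (stated for a pre-fixed norm ball) and declares the corollary ``immediate''.
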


Note that DNFs are known to be learnable using polynomial threshold functions of degree $\tilde{O}(n^{1/3})$, which can be implemented using a linear classifier over a mapping $\Psi : \{\pm 1\}^n \to [-1,1]^{N}$, for $N = 2^{\tilde{O}(n^{1/3})}$ \citep{klivans2004learning}. Our result shows that no linear class can approximate DNFs unless $N = 2^{\Omega(n^{1/3})}$, regardless of the choice of $\Psi$. This extends the result in \cite{razborov2010sign}, which shows a similar bound on the dimension of a linear class that is required to exactly express DNFs.

\subsection{Hardness of Approximating $\mathrm{AC}^0$ using Shallow Networks}

In Theorem \ref{thm:and_or_variance} we showed a family $\ca$ over $\cx \subseteq \{\pm 1\}^n$, where every $(f,\cd) \in \ca$ is identified with $\z \in \cz$, with $\cz \subseteq\{\pm1\}^n$, such that $f(\x) = \bigwedge_{i=1}^m \bigvee_{j=1}^{dm^2} (x_{ij} \wedge z_{ij})$. So, we can define the function $F : \cx \times \cz \to \{\pm 1\}$, induced from the family $\ca$, such that $F(\x,\z) = \bigwedge_{i=1}^m \bigvee_{j=1}^{dm^2} (x_{ij} \wedge z_{ij})$.
Observe that $F \in \mathrm{AC}^0$, where $\mathrm{AC}^0$ is the class of polynomial-size constant-depth AND/OR circuits. Then, Theorem \ref{thm:shallow_approx_lowerbound} implies the following:
\begin{corollary}
Let $\cx = \{\pm 1\}^n$, and let $\ell$ be a 1-Lipschitz convex loss satisfying $\ell(0,y) = \ell_0$ and $\ell'(0,y) = -y$. There exists a function $F : \cx \to [-1,1]$ such that $F \in \mathrm{AC}^0$, and a distribution $\cd'$ over $\cx$, such that for every neural-network $g$, with $1$-Lipschitz activation, using $k$ neurons with weights of $L_2$-norm at most $R$, has loss lower-bounded by:
\[
L_{F(\cd')}(g) \ge \ell_0 - \frac{\sqrt{k} R^2 n^{5/6}}{2^{\Omega(n^{1/3})}}
\]
\end{corollary}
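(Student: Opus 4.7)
The plan is to derive the corollary as an immediate consequence of Theorem \ref{thm:shallow_approx_lowerbound} applied to the family $\ca$ produced by Theorem \ref{thm:and_or_variance}. First I would instantiate the family $\ca$ over $\cx', \cz' \subseteq \{\pm 1\}^{dm^3}$ with the AND-OR-AND structure and the variance bound $\Var(\ca) \le 17^{-2m}$ guaranteed by Theorem \ref{thm:and_or_variance}. The induced function $F(\x,\z) = \bigwedge_{i=1}^m \bigvee_{j=1}^{dm^2}(x_{ij} \wedge z_{ij})$ is a depth-three AND/OR circuit of polynomial size and hence lies in $\mathrm{AC}^0$. Setting $n = 2dm^3$ and identifying $\cx' \times \cz'$ with a subset of $\{\pm 1\}^n$, we have $m = \Theta(n^{1/3})$.

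Next I would feed $\ca$ directly into Theorem \ref{thm:shallow_approx_lowerbound}. That theorem provides, for any depth-two network $g$ with $k$ neurons, weights of $L_2$-norm at most $R$, and $1$-Lipschitz activation,
\[
L_{F(\cd')}(g) \ge \ell_0 - 6\sqrt{k}\, R^2 n^{5/6} \Var(\ca)^{1/3}.
\]
Substituting the variance bound from Theorem \ref{thm:and_or_variance} gives $\Var(\ca)^{1/3} \le 17^{-2m/3}$, and because $17^{-2m/3} = 2^{-\Theta(m)}$ with $m = \Theta(n^{1/3})$, this quantity is $2^{-\Omega(n^{1/3})}$. The polynomial prefactor $n^{5/6}$ is absorbed into the subexponential denominator, yielding the claimed bound
\[
L_{F(\cd')}(g) \ge \ell_0 - \frac{\sqrt{k}\, R^2 n^{5/6}}{2^{\Omega(n^{1/3})}}.
\]

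The only real bookkeeping step is to check that the parameter called $n$ inside Theorem \ref{thm:shallow_approx_lowerbound} matches, up to universal constants, the total input dimension of $F$ appearing in the corollary. Since both $\cx'$ and $\cz'$ lie in $\{\pm 1\}^{dm^3}$, the norm bounds $\norm{\x}, \norm{\z} \le \sqrt{dm^3}$ used inside the proof of Theorem \ref{thm:shallow_approx_lowerbound} give the same asymptotic dependence on $n$ after folding $\cx' \times \cz'$ into $\{\pm 1\}^n$. There is no substantive obstacle here: the heavy lifting has already been done inside Theorem \ref{thm:and_or_variance} (the pattern-matrix operator-norm bound adapted from \cite{razborov2010sign}) and inside Theorem \ref{thm:shallow_approx_lowerbound} (the discretization reduction turning a depth-two network with bounded weights into a linear class, followed by the variance lower bound on the approximation error of linear classes). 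The present corollary is a one-line composition of these two results.
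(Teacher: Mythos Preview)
Your proposal is correct and matches the paper's approach exactly: the paper treats this corollary as a direct consequence of plugging the family $\ca$ from Theorem~\ref{thm:and_or_variance} (with $\Var(\ca)\le 17^{-2m}$ and $m=\Theta(n^{1/3})$) into Theorem~\ref{thm:shallow_approx_lowerbound}, after observing that the induced function $F(\x,\z)=\bigwedge_i\bigvee_j(x_{ij}\wedge z_{ij})$ is a polynomial-size constant-depth circuit and hence in $\mathrm{AC}^0$. One tiny wording slip: the factor $n^{5/6}$ is not ``absorbed'' into the denominator but remains in the stated bound (only the constant $6$ is absorbed into the $\Omega(\cdot)$), which is exactly what you write in your final displayed inequality anyway.
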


This extends the result in \cite{razborov2010sign}, which shows that such function cannot be exactly implemented using a threshold circuit, unless its size is $2^{\Omega(n^{1/3})}$.

\subsection{Hardness of Learning DNFs}
As noted, the family $\ca$ from Theorem \ref{thm:and_or_variance} defines a family of distributions labeled by DNF formulas. In Theorem \ref{thm:sq_hardness}, we showed that families with small variance are hard to learn from correlation queries. Therefore, we get an exponential lower bound on the number of queries required for learning DNF formulas from correlation queries, with respect to the hinge-loss:
\begin{corollary}
For any $\tau > 0$, any statistical-query algorithm that makes only correlation queries needs at least $\tau^2 2^{\Omega(n^{1/3})}$ queries to achieve hinge-loss $< 1-\tau$, square loss $< 1-2\tau$ or zero-one loss $<\frac{1}{2}-\frac{1}{2}\tau$, on DNF formulas of dimension $n$.
\end{corollary}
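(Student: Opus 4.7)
The plan is to combine Theorem \ref{thm:and_or_variance} with Theorem \ref{thm:sq_hardness} via a short negation argument. Theorem \ref{thm:and_or_variance} supplies a family $\ca$ on a subset of $\{\pm 1\}^n$ with $n = dm^3$, whose labels have the form $f(\x) = \bigwedge_{i=1}^m \bigvee_{j=1}^{dm^2}(x_{ij} \wedge z_{ij})$ and which satisfies $\Var(\ca) \le 17^{-2m} = 2^{-\Omega(n^{1/3})}$, so the target exponent $\Omega(n^{1/3})$ drops out of the identification $m = \Theta(n^{1/3})$.

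First I would convert $\ca$ into an honest family of DNF-labeled distributions. Because $\z$ is fixed once $(f,\cd)$ is fixed, the inner conjunction $x_{ij} \wedge z_{ij}$ collapses coordinatewise, giving $f(\x) = \bigwedge_i \bigvee_{j : z_{ij}=1} x_{ij}$, a monotone CNF in $\x$ alone, whose negation $\neg f(\x) = \bigvee_i \bigwedge_{j : z_{ij}=1} \neg x_{ij}$ is a monotone DNF in the $n$ input bits. Setting $\ca' := \{(\neg f, \cd) : (f,\cd) \in \ca\}$ therefore yields a family of DNF-labeled distributions, and since $\inner{\neg f, \phi}_\cd = -\inner{f, \phi}_\cd$, the variance is invariant under the squaring inside its definition, so $\Var(\ca') = \Var(\ca) \le 17^{-2m}$.

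Next I would plug $\ca'$ into Theorem \ref{thm:sq_hardness}: any CSQ algorithm attaining loss strictly below $a_\ell - b_\ell \tau$ must use at least $\tau^2 / \Var(\ca') - 1 \ge \tau^2 \cdot 17^{2m} - 1 = \tau^2 \cdot 2^{\Omega(n^{1/3})}$ queries. Reading off the three pairs of constants appearing in the case analysis at the end of the proof of Theorem \ref{thm:sq_hardness} — namely $(a_\ell,b_\ell) = (1,1)$ for the hinge loss, $(1,2)$ for the square loss, and $(1/2,1/2)$ for the zero-one loss — reproduces the three thresholds stated in the corollary exactly.

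The only step that takes a little care is the DNF bookkeeping in the first paragraph: verifying that $\neg f$ is a DNF over the $n$ input bits $\x$ alone, rather than over the concatenated variables $(\x,\z)$, which is what makes this a statement about learning DNFs (and not some wider concept class of $\cx \times \cz$-functions). Once that observation is in place, the rest of the argument is a one-line substitution of $\Var(\ca) \le 17^{-2m}$ into the generic CSQ lower bound from Theorem \ref{thm:sq_hardness}.
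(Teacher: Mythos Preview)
Your proposal is correct and matches the paper's approach exactly: the paper simply states that the corollary follows by combining Theorem~\ref{thm:and_or_variance} (giving $\Var(\ca) \le 17^{-2m}$ with $m = \Theta(n^{1/3})$) with Theorem~\ref{thm:sq_hardness}, after noting that $\neg f$ is a DNF for each $(f,\cd)\in\ca$. You have filled in precisely the details the paper leaves implicit --- the collapse of $f$ to a CNF in $\x$ once $\z$ is fixed, the De Morgan step, and the invariance of $\Var$ under label negation --- and read off the loss-specific constants correctly.
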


Following these results, using Theorem \ref{thm:hardness_noisy_gd} shows that any hypothesis class (with bounded gradients), optimized with gradient-descent on the hinge-loss, will need at least $\Omega(n^{1/3})$ gradient-iterations to approximate the family of DNF formulas:

\begin{corollary}
Assume we initialize $\bw_0$ from some distribution $\cw$ such that almost surely, for every $\x \in \cx$ we have $\abs{h_{\bw_0}(\x)} \le \frac{1}{2}$. Assume that every $h_\bw \in \ch$ is differentiable and satisfies $\norm{\nabla_\bw h_\bw(\x)} \le B$ for all $\x \in \cx$.
Then, there exists some DNF $f$ and distribution $\cd$ such that the noisy gradient-descent algorithm with the hinge-loss requires at least $2^{\Omega(n^{1/3})} \frac{\Delta^2}{B^2N}$ steps to approximate $f$ with respect to $\cd$.
\end{corollary}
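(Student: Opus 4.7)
The plan is to combine the two ingredients we already have in hand: the variance bound for the AND-OR-AND family from Theorem \ref{thm:and_or_variance}, and the generic lower bound on noisy gradient-descent from Theorem \ref{thm:hardness_noisy_gd}. Concretely, I will take the family $\ca$ constructed in Theorem \ref{thm:and_or_variance}, whose elements are labeled by functions of the form $f(\x) = \bigwedge_{i=1}^m \bigvee_{j=1}^{dm^2}(x_{ij} \wedge z_{ij})$; as observed in the previous subsection, $\neg f$ is a DNF in $\x$ (for each fixed $\z$), so up to a sign flip of the label, $\ca$ is a DNF-labeled family in ambient dimension $n = dm^3$, hence $m = \Theta(n^{1/3})$.

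Next, I will substitute the variance bound $\Var(\ca) \le 17^{-2m}$ directly into Theorem \ref{thm:hardness_noisy_gd}. The two hypotheses of the corollary, namely $\abs{h_{\bw_0}(\x)} \le \tfrac{1}{2}$ almost surely and $\norm{\nabla_\bw h_\bw(\x)} \le B$ uniformly in $\x$, are exactly the preconditions of Theorem \ref{thm:hardness_noisy_gd}. That theorem therefore guarantees the existence of some $(f,\cd) \in \ca$ on which noisy gradient-descent, run for any $T \le \tfrac{\Delta^2}{32 B^2 N \Var(\ca)}$ steps with the learning rate specified in the theorem, returns a hypothesis whose expected hinge-loss is at least $\tfrac{3}{4}(1 - \sqrt{8 \Var(\ca)})$.

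Plugging the variance bound in, we get $\Var(\ca) \le 2^{-\Omega(n^{1/3})}$, so the permissible step count becomes $T \le 2^{\Omega(n^{1/3})} \tfrac{\Delta^2}{B^2 N}$, while simultaneously the lower bound on the loss becomes $\tfrac{3}{4}(1 - o(1))$, which is bounded away from zero. In other words, unless the algorithm performs at least $2^{\Omega(n^{1/3})} \tfrac{\Delta^2}{B^2N}$ noisy gradient steps, its returned hypothesis cannot even weakly approximate the target with respect to the hinge-loss, which is exactly the conclusion of the corollary.

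There is no genuine obstacle, only a bookkeeping point to be careful about: the family $\ca$ is labeled by the AND-OR-AND function $f$, whereas the corollary is phrased with DNF targets. The cleanest way to bridge this is to note that the hinge-loss is symmetric under simultaneous sign flip of $\hat y$ and $y$, so any hardness statement for gradient descent on $\ch$ against the target $f$ transfers to the negated target $\neg f$ by running gradient descent on the negated class $\{-h : h \in \ch\}$, which has identical bounds on $\abs{h_{\bw_0}}$ and $\norm{\nabla_\bw h_\bw}$. Hence the lower bound transfers verbatim from $\ca$ to the family of DNF-labeled distributions $\{(\neg f, \cd) : (f,\cd) \in \ca\}$, yielding the stated bound.
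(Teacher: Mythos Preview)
Your proposal is correct and follows essentially the same approach as the paper: the paper simply states that the corollary follows by combining the variance bound $\Var(\ca) \le 17^{-2m}$ from Theorem~\ref{thm:and_or_variance} (with $m = \Theta(n^{1/3})$) with the generic noisy gradient-descent lower bound of Theorem~\ref{thm:hardness_noisy_gd}. Your treatment is in fact more careful than the paper's, since you explicitly address the $f \leftrightarrow \neg f$ sign-flip needed to pass from the AND-OR-AND labels to DNF labels, a point the paper glosses over (it is harmless because $\Var(\ca)$ is invariant under negating all labels).
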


Note that these hardness results match the currently known upper bound of learning DNFs in time $2^{\tilde{O}(n^{1/3})}$, due to \cite{klivans2004learning}. While these results apply only to a restricted family of algorithms (namely, correlation query algorithms and gradient-descent), we hope similar techniques can be used to show such hardness results for a broader family of algorithms.

\newpage

\bibliography{bib}
\bibliographystyle{plain}

\end{document}